\documentclass[10pt,journal,compsoc]{IEEEtran}
%\documentclass[10pt,journal,letterpaper,twosides,twocolumns]{IEEEtran}
% if you need to pass options to natbib, use, e.g.:
\PassOptionsToPackage{numbers, compress}{natbib}
% before loading nips_2018

% to compile a preprint version, e.g., for submission to arXiv, add
% add the [preprint] option:
% \usepackage[preprint]{nips_2018}

% to compile a camera-ready version, add the [final] option, e.g.:
% \usepackage[final]{nips_2018}

% to avoid loading the natbib package, add option nonatbib:
% \usepackage[nonatbib]{nips_2018}

% added by mohammed
%\input{./included_packages.tex}
\newcommand{\tr}[1]{\textrm{#1}}
\newcommand{\mr}[1]{\mathrm{#1}}

\newcommand{\mc}[1]{\mathcal{#1}}
\newcommand{\mf}[1]{\mathsf{#1}}

% overlines and underlines

% boldsymbols

\newcommand{\bp}{\boldsymbol{p}}
\newcommand{\bP}{\boldsymbol{P}}
\newcommand{\bq}{\boldsymbol{q}}
\newcommand{\bQ}{\boldsymbol{Q}}

\newcommand{\bx}{\boldsymbol{x}}

\newcommand{\bz}{\boldsymbol{z}}

% boldsymbol greek (see http://jblevins.org/log/greek)

\newcommand{\btheta}{\boldsymbol{\theta}}

% Capital boldsymbol greek (see http://jblevins.org/log/greek)

% referencing 
  %% special equation referencing
\newcommand{\figref}[1]{Fig.~\ref{#1}}

%\newcommand{\chapref}[1]{Ch.~\ref{#1}}

%sums

% equations

% connectors
%\newcommand{\e}{\mr{e}}
      % imaginary j

\newcommand{\ie}{i.e.,~} 		%note the comma and the space (~)
\newcommand{\eg}{e.g.,~}	%note the comma and the space (~)
		%note the space (~)

%% Editing commands

%%%%%% IMPORTANT notation: Mathematical operators
 % Empty set \varnothing is nicer than \emptyset :)
%\newcommand{\convop}{\mathop{\star}} % LS' preference
 % AA's preference
\newcommand{\argmax}{\mathop{\mr{argmax}}}
\newcommand{\argmin}{\mathop{\mr{argmin}}}

\newcommand{\set}[1]{\{#1\}}

\newcommand{\cd}{\cdot}
\newcommand{\ld}{\ldots}

\newcommand{\e}{\mr{e}}

	% Hamming Distance
		% Hamming weight

			% Slope of ZCMod approximation
		% intercept of ZCMod approximation

  %% sequence
   %%% random variable
  %% sequence of random var.
    % real part
   % imaginary part

   % sign

       	% Probability
%\newcommand{\pdf}{\tr{p}}            		% PDF
%\newcommand{\pmf}{\tr{P}}            		% PMF
            			% PDF. I vote for having the PDFs and the PMFs as italic because the pmf as \tr{P}_\bX(\bX) looks weird and also because I like to see the pmfs and the pdfs as functions, which we denote using italic letters (like the Q-function, f(t), s(t), h(t), etc.)

  %%% uniform constellation
  %%% uniform constellation

            			% PMF
            			% CDF
            			% complementary CDF 

%\newcommand{\Ex}[1]{\ms{E}\set{#1}}    % Expectation.
%\newcommand{\IND}[1]{\ms{I}_{[#1]}}   	% Indicator function
%\newcommand{\IND}[1]{I_{[#1]}}   	% Indicator function
   	% Indicator function
     			% Expectation (AA).
            		% transpose
             		% Hermitian transpose 
%\newcommand{\dd}{\mr{d}}             		% LS: differentiation operator 
             		% LS: differentiation operator (\, added to make it nicer)
        	% LS: index
            % LS: index
             	% LS: index
             	% Laplace tranform of pdf

% TCM paper

% Alphabets
	% M-PSK
	% M-PAM
	% M-QAM

             	% bit-wise distance spectrum (UEP chapter)

% Sets often used
% Caligraphics Capital

\newcommand{\mcH}{\mc{H}}
\newcommand{\mcI}{\mc{I}}

\newcommand{\mcO}{\mc{O}}

\newcommand{\mcW}{\mc{W}}
\newcommand{\mcX}{\mc{X}}

\newcommand{\mcZ}{\mc{Z}}

% Caligraphics Small

\newcommand{\mfK}{\mf{K}}
\newcommand{\mfX}{\mf{X}}

\newcommand{\Real}{\mathbb{R}}		% R (The real,natural, etc. should used blackboard bold fonts... \mathbb{})
\newcommand{\W}{\mf{W}}              	% Bandwidth of the signals
\ifCLASSOPTIONcompsoc
  % IEEE Computer Society needs nocompress option
  % requires cite.sty v4.0 or later (November 2003)
  \usepackage[nocompress]{cite}
\else
  % normal IEEE
  \usepackage{cite}
\fi

\usepackage[utf8]{inputenc} % allow utf-8 input
\usepackage[T1]{fontenc}    % use 8-bit T1 fonts
\usepackage{hyperref}       % hyperlinks
\usepackage{url}            % simple URL typesetting
\usepackage{booktabs}       % professional-quality tables
\usepackage{amsfonts}       % blackboard math symbols
\usepackage{nicefrac}       % compact symbols for 1/2, etc.
\usepackage{microtype}      % microtypography
\usepackage{float}
\usepackage{epsfig}
\usepackage{graphicx,psfrag}
\usepackage{amsmath}
\usepackage{amssymb}
\usepackage{array}
\usepackage{multirow}
\usepackage{subfig} 
\usepackage[scientific-notation=true]{siunitx}
\usepackage[ruled,vlined,linesnumbered]{algorithm2e}
\usepackage{cite}
\usepackage{amsthm}
\usepackage[usenames, dvipsnames]{color}
\usepackage{caption}

\title{Deep clustering: On the link between discriminative models and K-means}

% The \author macro works with any number of authors. There are two
% commands used to separate the names and addresses of multiple
% authors: \And and \AND.
%
% Using \And between authors leaves it to LaTeX to determine where to
% break the lines. Using \AND forces a line break at that point. So,
% if LaTeX puts 3 of 4 authors names on the first line, and the last
% on the second line, try using \AND instead of \And before the third
% author name.

\author{Mohammed Jabi, Marco Pedersoli, Amar Mitiche and Ismail Ben Ayed%~\IEEEmembership{Member,~IEEE,}% <-this % stops a space
\IEEEcompsocitemizethanks{\IEEEcompsocthanksitem M. Jabi is with Ultra Electronics-TCS, Montreal, Canada.

\IEEEcompsocthanksitem M. Pedersoli and I. Ben Ayed are with ETS Montreal, Canada.
%[E-mail: mohammed.jabi.mj@gmail.com,  \{ismail.benayed,marco.pedersoli\}@etsmtl.ca]
\IEEEcompsocthanksitem A. Mitiche is with INRS, Montreal, Canada. 
%[E-mail: amar.mitiche@emt.inrs.ca.]
}
}% <-this % stops an unwanted space
%\thanks{Manuscript received April 19, 2005; revised August 26, 2015.}}

\newtheorem{proposition}{Proposition}

\begin{document}
% \nipsfinalcopy is no longer used

\IEEEtitleabstractindextext{%
\begin{abstract}

In the context of recent deep clustering studies, discriminative models dominate the literature and report the most competitive performances. These models learn a deep discriminative neural network classifier in which the labels are latent. Typically, they use multinomial logistic regression posteriors and parameter regularization, as is very common in supervised learning. It is generally acknowledged that discriminative objective functions (e.g., those based on the mutual information or the KL divergence) are more flexible than generative approaches (e.g., K-means) in the sense that they make fewer assumptions about the data distributions and, typically, yield much better unsupervised deep learning results. On the surface, several recent discriminative models may seem unrelated to K-means. This study shows that these models are, in fact, equivalent to K-means under mild conditions and common posterior models and parameter regularization. We prove that, for the commonly used logistic regression posteriors, maximizing the $L_2$ regularized mutual information via an approximate alternating direction method (ADM) is equivalent to minimizing a soft and regularized K-means loss. Our theoretical analysis not only connects directly several recent state-of-the-art discriminative models to K-means, but also leads to a new soft and regularized deep K-means algorithm, which yields competitive performance on several image clustering benchmarks.  

\end{abstract}

\begin{IEEEkeywords}
Deep Clustering, Convolutional Neural Networks, Alternating Direction Methods, K-means, Mutual Information, Kullback–Leibler (KL) divergence, Regularization, Multilogit Regression.
\end{IEEEkeywords}}

% make the title area
\maketitle

\IEEEdisplaynontitleabstractindextext

\IEEEpeerreviewmaketitle

\section{Introduction}

\IEEEPARstart{O}{ne} of the most fundamental unsupervised learning problem, clustering aims at grouping data into categories. Obtaining meaningful categorical representations of data without supervision
is fundamental in a breadth of applications of data analysis and visualization. With the excessive amounts of high-dimensional data (e.g., images) routinely collected everyday, the problem 
is currently attracting substantial research interest, in both the learning and computer vision communities. 

Clustering performance heavily depends on the structure of the input data. Therefore, representation learning methods, which encode the original data in feature spaces where the grouping tasks become much easier, are 
widely used in conjunction with clustering algorithms. Typically, feature learning and clustering are performed sequentially \cite{Trigeorgis2014}. However, with the success of deep neural networks (DNNs), a large number of recent studies, \eg \cite{Caron2018,Yang17,Dizaji17,Hu17,Jiang16,Springenberg15,Yang16,Xie16}, investigated joint learning of feature embedding (via DNNs) and estimation of latent cluster assignments (or labels). Commonly, these recent models are stated as the optimization of objective functions that integrate two types of losses: (1) a clustering loss, which depends on both latent cluster assignments and deep network parameters and, (2) a reconstruction loss as a data-dependent regularization, e.g., via an auto-encoder \cite{Dizaji17}, to prevent the embedding from over-fitting.  

 Clustering objectives fall into two main categories, generative, \eg $\mbox{K}$-means and Gaussian Mixture Models \cite{Biernacki00}, and  discriminative, \eg graph clustering clustering \cite{ShiMalik2000,kernelcut,DensityBias} and information-theoretic models \cite{Krause10,Bridle92}. Generative objectives explicitly model the density of data points within the clusters via likelihood functions, whereas discriminative objectives learn the decision boundaries in-between clusters via conditional probabilities (posteriors) over the labels given the inputs. In the context of recent deep clustering models, discriminative objectives dominate the literature and report the most competitive performances \cite{Dizaji17,Hu17,Xie16,Springenberg15}. For instance, \cite{Hu17,Springenberg15} learned deep discriminative neural network classifiers that maximize the mutual information (MI) between data inputs and latent labels (cluster assignments), following much earlier MI-based clustering works \cite{Bridle92,Krause10}. 
 In another very recent line of deep discriminative clustering investigations, e.g., \cite{Xie16,Dizaji17}, the problem is addressed  by introducing auxiliary target distributions, which can be viewed as latent probabilistic point-to-cluster assignments. Then, it is stated as the minimization of a mixed-variable objective containing the Kullback-Leibler (KL) divergence between these auxiliary targets and the posteriors of a discriminative deep network classifier, typically expressed as standard multilogit regression functions \cite{Dizaji17,Krause10}. The minimization is carried out by alternating two sub-steps, until convergence. The first sub-step fixes the network parameters and optimizes the objective w.r.t the targets. The second fixes target assignments, and optimizes the objective w.r.t  network parameters. Conveniently, this sub-step takes the form of standard supervised classifiers, in which the ground-truth labels are given by the latent auxiliary targets. The KL divergence is used in conjunction with other terms, to favor balanced partitions and to regularize model parameters. 

Generative models were also investigated in the context of deep clustering \cite{Yang17,Caron2018,Jiang16}. For instance, in \cite{Yang17,Caron2018}, a DNN is trained with a loss function that includes the standard $\mbox{K}$-means clustering objective. However, in the literature, it is commonly acknowledged that discriminative models are more flexible in the sense that they make fewer assumptions 
about data distributions and, typically, yield better unsupervised learning results, \eg
\begin{quote}
``{\em Generally it has been argued that the discriminative models often have better results compared to their generative counterparts}" \cite{Dizaji17}
\end{quote}
\begin{quote}
`` \ld {\em discriminative clustering techniques represent the boundaries or distinctions between categories. Fewer assumptions about the nature of categories are made, making these methods powerful and flexible in real world applications}'' \cite{Krause10}
\end{quote}
Furthermore, the results reported in the literature suggest that the performances of discriminative models are significantly better. For instance, the DEPICT model in \cite{Dizaji17}, which is based on the KL-divergence between multilogit regression posteriors and targets, reports a state-of-the-art performance on MNIST nearly approaching supervised learning performance. This discriminative model outperforms significantly the K-means loss investigated recently in the deep clustering model in \cite{Yang17}, with $14\%$ difference in accuracy; see Table \ref{tab:results}. On the surface, the several recent discriminative models based on either the MI or KL objectives, e.g., \cite{Dizaji17,Hu17,Springenberg15}, may seem completely unrelated to $\mbox{K}$-means. Our study shows that they are, in fact, equivalent to ${\mbox K}$-means under mild conditions and commonly used posterior models and parameter regularization. The following lists the main contributions of this work.   

\begin{itemize}
\item For the commonly used logistic regression posteriors, we prove that maximizing the $L_2$ regularized mutual information via an approximate alternating direction method (ADM) 
is equivalent to a soft and regularized K-means loss (Proposition \ref{prop:link-to-K-means}). 

\item We establish the link between state-of-the-art KL-based models, e.g., DEPICT \cite{Dizaji17}, and the standard mutual information objective \cite{Krause10}, which is used in a number recent deep clustering works \cite{Hu17,Springenberg15}. In particular, we show that optimizing the KL objective, in conjunction with a balancing term, can be viewed as an approximate ADM solution for optimizing the mutual information.

\item We give theoretical results that connect directly several recent discriminative formulations to K-means. Furthermore, this leads to a new soft and regularized version of deep K-means, which has approximately the same competitive performances as state-of-the-art discriminative algorithms on several benchmarks (Table \ref{tab:results}). 
\end{itemize}   

\section{Deep discriminative clustering models}

 Let $\mcX=\big\{\bx_1,\ld, \bx_N \big\}$ be an unlabeled data set composed of $N$ samples, each of dimension $d_x$, \ie $\bx_i\in \Real^{d_x}$. The purpose is to cluster the $N$ samples into $K$ categories (clusters). The data samples are embedded into a feature space $\mcZ=\big\{\bz_1,\ld, \bz_N \big\}$ using a mapping $\phi_\mcW: \mcX \rightarrow \mcZ$, where $\mcW$ are learnable  parameters and $\bz_i\in \Real^{d_z}$, with $d_z<<d_x$, \ie the dimensionality of $\mcZ$ is much smaller than $\mcX$. In recent deep clustering models, as in \cite{Dizaji17,Xie16,Springenberg15}, for instance, the embedding function $\phi_\mcW$ is learned jointly with latent cluster assignments (or labels) using a Deep Neural Network (DNN), in which case $\mcW$ denotes the set of network parameters. These models are stated as the optimization of an objective that integrates two types of loss terms: (1) a clustering loss, which depends on both latent cluster assignments and network parameters, and, (2) a reconstruction loss ${\cal R}(\mcZ)$ as a data-dependent regularization, e.g., via an auto-encoder \cite{Dizaji17}, to prevent the embedding from over-fitting. While all the recent deep clustering objectives discussed in the following used a reconstruction loss, we will focus only on the clustering losses in this section for the sake of clarity; we will discuss a reconstruction loss in more detail in section \ref{sec.experiments}.

\subsection{Mutual information}

Following the works in \cite{Bridle92,Krause10}, maximizing the mutual information between data inputs and latent cluster assignments is commonly used in discriminative clustering. Also, very recently, the concept is revisited in several deep clustering studies, e.g., \cite{Hu17,Springenberg15}, which learned discriminative neural network classifiers that maximize the mutual information and obtained competitive 
performances. In general, the problem amounts to maximizing the following clustering loss:  
\begin{align}\label{eq:MI}
\mcI(\mfX,\mfK)= \mcH(\mfK)-\mcH(\mfK|\mfX),
\end{align}
where $\mcH(\cd)$ and $\mcH(\cd|\cd)$ are the entropy and conditional entropy, respectively. $\mfK \in \set{1,\ld,K}$ and $\mfX \in \mcX$ denote random variables for cluster assignments (latent labels) and data samples, respectively. The objective is to learn a conditional probability (posterior) over the labels given the input data, which we denote $p_{ik}$. The marginal distribution of labels can be estimated as follows \cite{Krause10}:
\begin{align}
\hat{p}_k=\frac{1}{N}\sum_{i=1}^{N}p_{ik}
\end{align}
Thus, the entropy terms appearing in the mutual information can be expressed with the posteriors as follows \cite{Krause10}:
\begin{align}
\mcH(\mfK)&=-\sum_{k=1}^{K}\hat{p}_k \log\Big(\hat{p}_k \Big)\\
\mcH(\mfK|\mfX)&=-\frac{1}{N}\sum_{i=1}^{N}\sum_{k=1}^{K}p_{ik} \log\big(p_{ik}\big)
\end{align}
Minimizing the conditional entropy of the posteriors, $\mcH(\mfK|\mfX)$, inhibits the uncertainty associated to the assignment of labels to each data point. Each point-wise conditional entropy in the sum reaches its minimum
when a single label $k$ has the maximum posterior for point $i$, i.e., $p_{ik} = 1$, whereas each of the other labels verifies $p_{ij} = 0, j \neq k$. %It reaches its maximum when all the posteriors are equal, $p_{ik} = 1/K$, which means the largest amount of uncertainty. 
In the semi-supervised setting, it is well known that this conditional entropy models effectively the {\em cluster assumption} \cite{Grandvalet2004}: The decision boundaries of the discriminative model should not occur at dense regions of the inputs. However, using this term alone in the unsupervised setting yields degenerate solutions, in which decision boundaries are removed \cite{Krause10}. Maximizing the entropy of the marginal distribution of labels, $\mcH(\mfK)$, avoids degenerate solutions as it biases the results towards balanced partitions\footnote{
Notice that $\mcH(\mfK)$ is equal up to an additive constant to the Kullback-Leiber (KL) divergence between the label distribution and the uniform distribution: $\mbox{KL}((\hat{p}_k, k = 1, \dots, K) \Arrowvert (\hat{u}_k, k = 1, \dots, K))$, with $\hat{u}_k = 1/K ~ \forall k$. Also, note that it is possible to encourage label distribution $\hat{p}_k$ to match any prior distribution $\hat{d}_k$, not necessarily uniform, simply by using  $\mbox{KL}((\hat{p}_k, k = 1, \dots, K)\Arrowvert (\hat{d}_k, k = 1, \dots, K)$ \cite{Krause10,Hu17}.}.

Finally, one has to choose a parametric model for posteriors $p_{ik}$, e.g., the widely used multilogit regression function \cite{Krause10,Dizaji17}:
\begin{align}\label{eq:p_expression}
p_{ik}\varpropto\exp(\btheta_k^{T}\bz_i+b_k),
\end{align}
where $\mcO=\set{\btheta_1,\ld,\btheta_K,b_1,\ld, b_K}$ is the set of weight vectors $\btheta_k$ and bias values $b_k$ for each cluster $k$. We note here that $p_{ik}$ is also related to the DNN parameters $\mcW$, \ie $p_{ik}\equiv p_{ik}(\btheta_k,b_k,\mcW)$,  since $\bz_i=\phi_\mcW(\bx_i)$. In the reminder of the paper, we will use probability-simplex vectors $\bp_i \in [0, 1]^K$ to denote $(p_{i1}, \dots, p_{iK})^t$ 
and matrix $\bP = (\bp_{i1}, \dots, \bp_{iK}) \in [0, 1]^{K \times N}$ to denote the posteriors of all data points. To simplify the notation, we will omit the explicit dependence of posteriors $p_{ik}$ on model parameters $\{\mcO, \mcW\}$.

\subsection{The KL divergence and auxiliary targets}

In another very recent line of deep discriminative clustering investigations, e.g., \cite{Xie16,Dizaji17}, the problem is stated by introducing auxiliary target distributions
$\bq_i = (q_{i1}, \dots, q_{iK})^t \in [0, 1]^K$, which are latent probabilistic point-to-cluster assignments within the simplex. Then, the problem is formulated as the minimization 
the KL divergences between these auxiliary targets and the posteriors of a discrimintaive deep network classifier, which we denote $\bp_i$, as earlier in the case of the mutual information.
Conveniently, in this case, the sub-problem of optimizing w.r.t the network parameters takes the form of a standard supervised classifier, in which the ground truth labels 
are given by the auxiliary targets. For instance, the recent state-of-the-art model in \cite{Dizaji17}, referred to as DEPICT, follows from minimizing the KL divergence and a term that encourages balanced cluster assignments, subject to simplex constraints: 
\begin{align}\label{eq:kl.disc}
&\min_{\Phi,\bQ}~\mbox{KL}(\bQ\Arrowvert \bP) + \gamma \sum_{k=1}^{K}\hat{q}_{k}\log\Big(\hat{q}_{k} \Big) \nonumber \\ 
&~~\text{s.t.}~~ \bq_i^t {\mathbf 1} = 1; \bq_i \geq 0 ~ \forall i
\end{align}
where matrix $\bQ = (\bq_{i1}, \dots, \bq_{iK}) \in [0, 1]^{K \times N}$ contains the targets for all points, $\Phi=\{\mcO,\mcW\}$ and $\hat{q}_k=\frac{1}{N}\sum_{i=1}^{N}q_{ik}$ is the empirical distribution 
of the target assignments. The KL divergence is defined as:
\begin{align}\label{eq:KL}
 \mbox{KL}(\bQ\Arrowvert \bP)=\frac{1}{N} \sum_{i=1}^{N}\sum_{k=1}^{K}q_{ik} \log\Big(\frac{q_{ik}}{p_{ik}}\Big). 
\end{align}
The model in \eqref{eq:kl.disc} depends on two different types of variables: auxiliary targets $\bQ$ and classifier parameters $\Phi$. Therefore, it is solved by alternating two sub-steps, until convergence:
\begin{itemize}
\item {\bf Parameter-learning step}: This step fixes target assignments $\bQ$ and optimizes \eqref{eq:kl.disc} w.r.t network parameters $\Phi$. Notice that, ignoring constant terms, this sub-step 
becomes equivalent to a cross-entropy loss, exactly as in standard supervised classifiers, with ground-truth labels given by fixed targets $\bQ$:  
\begin{align}\label{eq:param-learn-depict}
\min_{\Phi}~-\frac{1}{N}\sum_{i=1}^{N}\sum_{k=1}^{K}q_{ik} \log p_{ik}
\end{align}
\item {\bf Target-estimation step}: This sub-step finds the target variable $\bQ$ that minimizes \eqref{eq:kl.disc}, with the network parameters fixed. 
Setting the approximated gradient equal to zero, it is easy to show that the optimal solution is given by \cite{Dizaji17}: 
\begin{align}\label{eq:q.DEPICT}
q_{ik} \varpropto \frac{p_{ik}}{\Big(\sum_{i'=1}^{N}p_{i'k}\Big)^{1/2}}
\end{align} 
\end{itemize}

{The above algorithm alternates two steps until convergence: (1) updates of network parameters $\Phi$ via back-propagation and stochastic gradient descent (SGD) corresponding to the standard cross-entropy loss; and (2) updates of target assignments $q_{ik}$, according to closed-form solution \eqref{eq:q.DEPICT}. While a direct maximization of the MI is based on SGD solely, this alternating scheme has an additional computational load of $O(NK)$, which comes from target-variable updates in \eqref{eq:q.DEPICT}. The computational load associated with updates \eqref{eq:q.DEPICT} is marginal in comparison to the load associated with SGD network training. In practice, the training time associated with this alternating ADM scheme is approximately of the same order as a direct SGD maximization of the MI.  
%Note, however, that the gradient updates of a direct maximization of the MI are more involved computationally as they require summations over all data points, which stem from the entropy of the marginal distributions of labels. Therefore, the alternating scheme might be more efficient for large data sets.
}

The following proposition establishes the link between the state-of-the-art DEPICT model in \eqref{eq:kl.disc}, which was introduced recently in \cite{Dizaji17}, and the standard mutual information objective in \eqref{eq:MI}, which was used in a number of other recent deep clustering works \cite{Hu17,Springenberg15}.
\begin{proposition}
\label{Prop:ADM-KL}
Alternating steps \eqref{eq:param-learn-depict} and \eqref{eq:q.DEPICT} for optimizing mixed-variable objective \eqref{eq:kl.disc} can be viewed as an approximate Alternating Direction Method (ADM)\footnote{The most basic form of the ADM approach transforms a single-variable problem of the form $\min_{x} u(x) + v(x)$ into a constrained two-variable problem of the form $\max_{x,y} u(x) + v(y) ~\text{s.t.}~ x = y$. This splits the original problem into two easier sub-problems, alternating optimization over variables $x$ and $y$.} \cite{boyd2011distributed} for maximizing the mutual information $\mcI(\mfX,\mfK)$ in \eqref{eq:MI} via the following constrained decomposition of the problem:
\begin{align}\label{eq:objective.MI-ADM}
& \max_{\Phi,\bQ}~\frac{1}{N} \sum_{i=1}^{N}\sum_{k=1}^{K}{q}_{ik} \log(p_{ik}) - \sum_{k=1}^{K}\hat{q}_{k}\log\Big(\hat{q}_{k} \Big)  \nonumber \\
&~~\mbox{s.t.}~~ \bQ=\bP; ~ \bq_i^t {\mathbf 1} = 1;~ \bq_i \geq 0 ~ \forall i
\end{align}
\end{proposition}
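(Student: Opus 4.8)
The plan is to exhibit the constrained two-variable program \eqref{eq:objective.MI-ADM} as a genuine ADM splitting of the single-variable problem $\max_\Phi \mcI(\mfX,\mfK)$, and then to match its two block updates against the DEPICT steps \eqref{eq:param-learn-depict} and \eqref{eq:q.DEPICT}, pinpointing where the qualifier ``approximate'' enters. First I would check that the decomposition collapses to the mutual information exactly on its constraint set: substituting $\bQ=\bP$ (hence $\hat q_k=\hat p_k$) into the objective of \eqref{eq:objective.MI-ADM} yields $\frac1N\sum_i\sum_k p_{ik}\log p_{ik}-\sum_k\hat p_k\log\hat p_k$, which by the definitions of $\mcH(\mfK|\mfX)$ and $\mcH(\mfK)$ equals $-\mcH(\mfK|\mfX)+\mcH(\mfK)=\mcI(\mfX,\mfK)$. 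This certifies the construction as a legitimate splitting in the sense of the footnote: the ``outer'' posteriors that weight the log-likelihood and form the label marginal are duplicated into an auxiliary simplex variable $\bQ$, reconnected to $\bP$ through the constraint $\bQ=\bP$.

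Next I would read off the two alternating updates. Holding $\bQ$ fixed, the marginal-entropy term $-\sum_k\hat q_k\log\hat q_k$ is constant in $\Phi$, so the $\Phi$-block maximizes $\frac1N\sum_i\sum_k q_{ik}\log p_{ik}$; negating turns this into the cross-entropy objective \eqref{eq:param-learn-depict}. Hence the parameter-learning block coincides \emph{exactly} with the DEPICT step, and the extra target-entropy term that DEPICT carries is constant in $\Phi$ and so does not perturb this identification.

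The crux, and the step I expect to be the main obstacle, is the $\bQ$-block, where ``approximate'' does its work. With $\Phi$ (hence $\bP$) fixed, an exact ADM would simply impose $\bQ=\bP$; instead one relaxes this hard equality by a proximity term natural to the simplex. I would show that adding the target entropy $\frac1N\sum_i\mcH(\bq_i)=-\frac1N\sum_i\sum_k q_{ik}\log q_{ik}$ to the ADM $\bQ$-objective $\frac1N\sum_i\sum_k q_{ik}\log p_{ik}-\gamma\sum_k\hat q_k\log\hat q_k$ combines the likelihood cross-term into $-\mbox{KL}(\bQ\|\bP)$, recovering precisely the DEPICT target sub-problem $\min_\bQ \mbox{KL}(\bQ\|\bP)+\gamma\sum_k\hat q_k\log\hat q_k$ of \eqref{eq:kl.disc}. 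Since $-\mbox{KL}(\bQ\|\bP)\le 0$ with equality iff $\bQ=\bP$, this entropy proximity is exactly the soft surrogate for the splitting constraint, which is what downgrades the scheme from exact to approximate ADM.

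Finally I would solve this relaxed $\bQ$-sub-problem: forming the per-point Lagrangian for the simplex constraints and setting the gradient to zero gives $\log q_{ik}=\log p_{ik}-\gamma\log\hat q_k+c_i$, i.e. $q_{ik}\varpropto p_{ik}\,\hat q_k^{-\gamma}$. The closed form \eqref{eq:q.DEPICT} then follows from a second, milder approximation — replacing the coupled marginal $\hat q_k$ by $\hat p_k$, legitimate because the proximity keeps $\bQ$ near $\bP$ — together with $\gamma=\tfrac12$, which delivers $q_{ik}\varpropto p_{ik}/\hat p_k^{1/2}\varpropto p_{ik}/(\sum_{i'}p_{i'k})^{1/2}$. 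The difficulty is conceptual rather than computational: one must state clearly the two distinct approximations (softening $\bQ=\bP$ into the KL proximity, and freezing $\hat q_k\approx\hat p_k$ in the gradient) and verify that both vanish at a fixed point with $\bQ=\bP$, where \eqref{eq:objective.MI-ADM} coincides with $\mcI(\mfX,\mfK)$ by the first step.
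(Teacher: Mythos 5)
Your overall strategy mirrors the paper's: check that the objective of \eqref{eq:objective.MI-ADM} collapses to $\mcI(\mfX,\mfK)$ on the constraint set $\bQ=\bP$, identify the $\Phi$-block with the cross-entropy step \eqref{eq:param-learn-depict}, and explain the $\bQ$-block by softening the equality constraint into a $\mbox{KL}(\bQ\Arrowvert\bP)$ proximity, with the target entropy $\mbox{H}(\bQ)$ (which vanishes at the simplex vertices) as the residual that makes the scheme only ``approximate.'' Those parts are correct and are essentially the paper's argument.

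The gap is in how you arrive at the specific update \eqref{eq:q.DEPICT}, i.e., where the exponent $1/2$ comes from. In the decomposition \eqref{eq:objective.MI-ADM} the coefficient of the balancing term $-\sum_{k}\hat{q}_{k}\log\hat{q}_{k}$ is forced to equal $1$ — otherwise the objective would not reduce to the mutual information when $\bQ=\bP$ — so you are not free to write it as $\gamma$ and later choose $\gamma=\tfrac12$. If, as you propose, you add $\mbox{H}(\bQ)$ only to the $\bQ$-subproblem, the cross term $\frac{1}{N}\sum_{i,k}q_{ik}\log p_{ik}$ is absorbed exactly once into $-\mbox{KL}(\bQ\Arrowvert\bP)$ and you land on the DEPICT target step with $\gamma=1$; its stationarity condition is $q_{ik}\varpropto p_{ik}/\hat{q}_{k}$, which after freezing $\hat{q}_{k}\approx\hat{p}_{k}$ gives $q_{ik}\varpropto p_{ik}/\sum_{i'}p_{i'k}$ — not \eqref{eq:q.DEPICT}. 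The paper instead penalizes the \emph{joint} two-variable objective with $-\mbox{KL}(\bQ\Arrowvert\bP)$ as a surrogate for $\bQ=\bP$, which contributes a \emph{second} copy of the cross term; after rescaling by $\tfrac12$ the problem becomes $\min_{\Phi,\bQ}\ \mbox{KL}(\bQ\Arrowvert\bP)+\tfrac12\sum_{k}\hat{q}_{k}\log\hat{q}_{k}+\tfrac12\mbox{H}(\bQ)$, i.e., DEPICT with $\gamma=\tfrac12$ plus the negligible entropy term, and it is precisely this doubling that produces the $(\sum_{i'}p_{i'k})^{1/2}$ in \eqref{eq:q.DEPICT}. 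So move the KL softening from the $\bQ$-step alone to the joint objective; as written, your derivation justifies the ADM interpretation for a different target update than the one the proposition actually refers to.
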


\begin{proof}
It is easy to see that equality-constrained problem \eqref{eq:objective.MI-ADM} is an ADM decomposition of the mutual information maximization in \eqref{eq:MI}. Notice that, when constraint $\bQ=\bP$ is satisfied, one can replace each auxiliary target ${q}_{ik}$ in the objective of \eqref{eq:objective.MI-ADM} by posterior ${p}_{ik}$, which yields exactly the mutual information in \eqref{eq:MI}:
\begin{align}
\label{eq:MI-2}
\mcI(\mfX,\mfK) = \frac{1}{N} \sum_{i=1}^{N}\sum_{k=1}^{K}{p}_{ik} \log(p_{ik}) - \sum_{k=1}^{K}\hat{p}_{k}\log\Big(\hat{p}_{k} \Big)
\end{align}
Rather than optimizing directly mutual information \eqref{eq:MI-2} with respect to the parameters of posteriors $\bP$, ADM splits the problem into two sub-problems by introducing auxiliary 
variable $\bQ$ and enforcing $\bQ=\bP$. Now, notice that one can solve constrained problem \eqref{eq:objective.MI-ADM} with a {\em penalty} approach. This replaces constraint $\bQ=\bP$ 
by adding a term to the objective, which penalizes some divergence between $\bQ$ and $\bP$, e.g., KL\footnote{KL is non-negative and is equal to zero if and only if the two distributions are equal.}:
\begin{align}\label{eq:objective.MI-ADM-KL}
& \max_{\Phi,\bQ}~ \frac{1}{N} \sum_{i=1}^{N}\sum_{k=1}^{K}{q}_{ik} \log(p_{ik}) - \sum_{k=1}^{K}\hat{q}_{k}\log\Big(\hat{q}_{k} \Big) -  \mbox{KL}(\bQ\Arrowvert \bP) \nonumber \\
&~~\text{s.t.}~~ \bq_i^t {\mathbf 1} = 1;~ \bq_i \geq 0 ~ \forall i
\end{align}

This is closely related to the principle of ADMM (Alternating Direction Method of Multipliers) \cite{boyd2011distributed}, except that KL is not a typical choice for a penalty to replace the equality constraints. Typically, ADMM methods use multiplier-based quadratic penalties for enforcing the equality constraint (also referred to as augmented Lagrangian). {We will discuss more details on maximizing the MI directly or via an alternating direction method in Sections \ref{ADM-properties} and \ref{sec.experiments}. Furthermore, we will discuss the standard approach based on multiplier-based quadratic penalties and its link to the KL penalties for simplex variables.} 

Using expression \eqref{eq:KL} of KL in the objective of \eqref{eq:objective.MI-ADM-KL}, and after some manipulations, we can show that the problem in \eqref{eq:objective.MI-ADM-KL} is equivalent to:
\begin{align}\label{eq:objective.MI-ADM-KL-2}
& \min_{\Phi,\bQ}~ \mbox{KL}(\bQ\Arrowvert \bP) + \frac{1}{2} \sum_{k=1}^{K}\hat{q}_{k}\log\Big(\hat{q}_{k} \Big) + \frac{1}{2} \mbox{H}(\bQ)   \nonumber \\
&~~\text{s.t.}~~ \bq_i^t {\mathbf 1} = 1;~ \bq_i \geq 0 ~ \forall i
\end{align}
where $\mbox{H}(\bQ) = - \frac{1}{N} \sum_{i=1}^{N}\sum_{k=1}^{K}{q}_{ik} \log(q_{ik})$ is the entropy of auxiliary variable $\bQ$. Notice that the first two terms in 
\eqref{eq:objective.MI-ADM-KL-2} correspond to the DEPICT model in \eqref{eq:kl.disc} for $\gamma=1/2$. The last term ${\mbox H}(\bQ)$ encourages peaked auxiliary distributions
$\bq_i=(q_{i1}, \dots, q_{iK})$. Each point-wise entropy in the sum reaches its minimum at one vertex of the simplex: a single label $k$ has the maximum target variable for point $i$, i.e., $q_{ik} = 1$, whereas
each of the other variables verifies $q_{ij} = 0, j \neq k$. Term $\mbox{H}(\bQ)$ is close to zero near the vertices of the simplex (peaked distributions $\bq_i$). Therefore, the mutual information objective we 
obtained in model \eqref{eq:objective.MI-ADM-KL-2}, which we refer to as MI-ADM, can be viewed as an approximation of the DEPICT model in \eqref{eq:kl.disc}. In fact, as we will see in our experiments, the additional 
entropy term in \eqref{eq:objective.MI-ADM-KL-2}, ${\mbox H}(\bQ)$, has almost no effect on the results: DEPICT and MI-ADM have approximately the same performances; see Table \ref{tab:results}.  
\end{proof}

With the model parameters fixed, setting the approximated gradient of \eqref{eq:objective.MI-ADM-KL-2} w.r.t the target variables equal to zero, we obtain the following updates:
\begin{align}\label{eq:optimal.q.Disc}
q_{ik} \varpropto \frac{p_{ik}^2}{\Big(\sum_{i'=1}^{N}p_{i'k}^2\Big)^{1/2}}
\end{align}  
Notice that these updates are slightly different from the DEPICT updates in \eqref{eq:q.DEPICT}, due to additional entropy term ${\mbox H}(\bQ)$. It is worth noting that the recent deep discrimintaive
clustering algorithm in \cite{Xie16} updated $q_{ik}$ as follows: 
\begin{align}\label{eq:optimal.D.P}
q_{ik} \varpropto \frac{p_{ik}^2}{\Big(\sum_{i'=1}^{N}p_{i'k}\Big)}
\end{align}
This expression was found \emph{experimentally}, and was not based on a formal statement of the problem.

\section{Deep K-means}\label{sec.generative}

The standard generative K-means objective, integrated with a reconstruction loss, was recently investigated in the context of deep clustering \cite{Yang17, Aljalbout18}. In this case, a DNN is trained 
with a loss function that includes the classical $\mbox{K}$-means clustering objective, which takes the following form:
\begin{align}\label{eq:objective.k.means}
\sum_{i=1}^{N}\sum_{k=1}^{K}s_{ik} \Arrowvert \bz_{i}-\bold{\mu}_k \Arrowvert^2 ~~\text{s.t.}~~ \sum_{k=1}^{K}s_{ik}=1; ~~ s_{i,k} \in \{0, 1\} ~ \forall i, k
\end{align}
where $\mu_k$ is the cluster prototype (mean of features $\bz_{i}$) and $s_{ik}$ is a binary integer variable for assigning data point $i$ to cluster $k$: $s_{ik} = 1$ when point $i$ is assigned to 
cluster $k$, and $s_{ik} = 0$ otherwise. Similarly to earlier, $\bz_{i}$ denotes features that are learned jointly with clustering via an additional reconstruction loss ${\cal R}(\mcZ)$. On the surface, the discriminative mutual information objective in Eq. \eqref{eq:MI-2} and its ADM approximation in the DEPICT model in Eq. \eqref{eq:kl.disc} may seem completely different from the $K$-means 
loss in \eqref{eq:objective.k.means}. The following proposition shows that they are, in fact, equivalent under mild conditions. 

\begin{proposition}
\label{prop:link-to-K-means}
For balanced partitions and multiclass logistic regression posteriors of the form in \eqref{eq:p_expression}, ADM maximization of a regularized mutual information defined by
\begin{align}\label{eq:RMI}
\mcI(\mfX,\mfK)-\lambda \sum_{k=1}^{K} \btheta_k^{T}\btheta_k,
\end{align}
is equivalent to the minimization of the following \emph{regularized soft} $K$-means loss function\footnote{We omitted simplex constraints $\bq_i^t {\mathbf 1} = 1$ and  $\bq_i \geq 0 ~ \forall i$ in both Eqs. \eqref{eq:RMI} and \eqref{eq:R-soft-kmeans}. This simplifies the presentation without causing any ambiguity.}:
\begin{align}
\label{eq:R-soft-kmeans}
\sum_{i=1}^{N}\sum_{k=1}^{K} q_{ik} \Arrowvert  \bz_i- \btheta_k'\Arrowvert^2 + \lambda K \sum_{i=1}^{N} \sum_{k=1}^{K}q_{ik} \log(q_{ik}) - \sum_{i=1}^{N} \bz_i^{T} \bz_i, 
\end{align}
where $\lambda \in \Real$ is the regularization parameter and $\btheta_k'$ is a soft cluster prototype (mean) defined by: 
\begin{align}
\btheta'_k=\frac{\sum_{i=1}^{N}q_{ik}\bz_i}{ \sum_{i=1}^{N}q_{ik}}.
\end{align}
\end{proposition}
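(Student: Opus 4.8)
The plan is to start from the ADM reformulation already established in Proposition~\ref{Prop:ADM-KL} and then carry the extra $L_2$ penalty $-\lambda\sum_k\btheta_k^T\btheta_k$ through the parameter-learning step. Concretely, I would write the regularized objective \eqref{eq:RMI} in its ADM form, so that the quantity to be maximized over $\{\bP,\bQ\}$ is
\begin{align*}
\frac{1}{N}\sum_{i=1}^N\sum_{k=1}^K q_{ik}\log p_{ik} - \sum_{k=1}^K\hat q_k\log\hat q_k - \lambda\sum_{k=1}^K\btheta_k^T\btheta_k .
\end{align*}
The \emph{balanced-partition} assumption enters immediately: it forces $\hat q_k=1/K$, so the marginal-entropy term becomes a constant and drops from the optimization, and it fixes the soft counts $\sum_i q_{ik}=N/K$, which is exactly what is needed to turn the soft prototype $\btheta_k'$ into $\tfrac{K}{N}\sum_i q_{ik}\bz_i$.

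Next I would insert the logistic-regression posterior \eqref{eq:p_expression} and \emph{complete the square} on the score: writing $\btheta_k^T\bz_i=-\tfrac12\|\bz_i-\btheta_k\|^2+\tfrac12\|\bz_i\|^2+\tfrac12\|\btheta_k\|^2$, the common factor $\exp(\tfrac12\|\bz_i\|^2)$ cancels inside the softmax normalizer, and the $L_2$ penalty together with the biases absorbs the $\tfrac12\|\btheta_k\|^2$ terms. This is the key device that converts the linear softmax into a \emph{distance-based} softmax, making $\log p_{ik}$ carry a $-\tfrac12\|\bz_i-\btheta_k\|^2$ contribution. Summing against $q_{ik}$ then produces the weighted squared-distance term $\sum_i\sum_k q_{ik}\|\bz_i-\btheta_k'\|^2$ of \eqref{eq:R-soft-kmeans}, while the $\bz_i^T\bz_i$ leftover from completing the square reproduces the $-\sum_i\bz_i^T\bz_i$ term.

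It then remains to identify the logistic weight $\btheta_k$ with the soft prototype $\btheta_k'$ and to recover the entropic regularizer $\lambda K\sum_i\sum_k q_{ik}\log q_{ik}$. For the first point I would use the first-order optimality condition of the parameter step, $\tfrac1N\sum_i(q_{ik}-p_{ik})\bz_i=2\lambda\btheta_k$, and invoke the ADM/balance approximation (replacing the competing posterior mass by the uniform marginal $1/K$) to obtain $\btheta_k\propto\tfrac1N\sum_i q_{ik}\bz_i\propto\btheta_k'$; this is precisely where the qualifier ``approximate'' ADM is used. The entropy term and the factor $\lambda K$ would then come out of rewriting $\sum_k q_{ik}\log p_{ik}=\sum_k q_{ik}\log q_{ik}-\mbox{KL}(\bq_i\Arrowvert\bp_i)$ under the ADM constraint $\bQ=\bP$, together with matching the scalar coefficients produced by the $\tfrac12$ from the completed square and the $1/N$ and $K$ factors coming from balance.

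The step I expect to be the main obstacle is the \emph{log-partition} (log-sum-exp) normalizer of the softmax, which couples all $K$ clusters nonlinearly and does not collapse to a clean quadratic on its own. Controlling it is exactly where the balanced-partition hypothesis and the approximate identification $\btheta_k\approx\btheta_k'$ must do the work, and where the coefficient bookkeeping---in particular the emergence of the precise factor $\lambda K$ in front of $\sum_i\sum_k q_{ik}\log q_{ik}$---has to be checked carefully so as to land on \eqref{eq:R-soft-kmeans} exactly, rather than on a loss that merely resembles it.
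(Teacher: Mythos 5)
Your overall architecture matches the paper's: start from the ADM/KL form of Proposition~\ref{Prop:ADM-KL}, append the $L_2$ penalty, substitute the logistic posteriors \eqref{eq:p_expression}, invoke balance, and complete squares to surface the distances. But the two steps you yourself flag as delicate are exactly where the plan, as written, fails. First, your unit-scale completion $\btheta_k^{T}\bz_i=-\tfrac12\Arrowvert\bz_i-\btheta_k\Arrowvert^2+\tfrac12\Arrowvert\bz_i\Arrowvert^2+\tfrac12\Arrowvert\btheta_k\Arrowvert^2$ leaves over $\Arrowvert\btheta_k\Arrowvert^2$ terms with coefficient $\tfrac1N\sum_i q_{ik}\approx\tfrac1K$, which the penalty $\lambda\sum_k\btheta_k^{T}\btheta_k$ can absorb only when $\lambda=1/K$ (and the biases, being linear in $b_k$, cannot absorb anything quadratic in $\btheta_k$). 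The paper instead first uses balance to rewrite $N\lambda\sum_k\btheta_k^{T}\btheta_k\approx\lambda K\sum_{i,k}q_{ik}\,\btheta_k^{T}\btheta_k$ and then completes the square in the rescaled variables $\bz_i/\sqrt{\lambda K}$ and $\sqrt{\lambda K}\btheta_k$ (Eq.~\eqref{eq:approximation.2} and the Appendix); this works for every $\lambda$, is what produces the rescaled prototype $\btheta_k'=\lambda K\btheta_k$, and is the sole source of the factor $\lambda K$ in front of the entropy after clearing denominators. Second, your identification of $\btheta_k$ with the soft mean via the stationarity condition $\tfrac1N\sum_i(q_{ik}-p_{ik})\bz_i=2\lambda\btheta_k$ does not yield $\btheta_k\propto\sum_i q_{ik}\bz_i$ (replacing the $p_{ik}$-weighted sum by a uniform one leaves a $\sum_i\bz_i$ term that does not vanish in general), and it is not needed: in the paper, the soft-mean formula \eqref{eq:centroids.Gener} is simply the closed-form optimizer of the resulting objective \eqref{eq:R-soft-kmeans} with respect to $\btheta_k'$, obtained after the equivalence is established.

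Two further points. You discard the marginal entropy $\sum_k\hat q_k\log\hat q_k$ as a constant under balance, whereas the paper keeps it, pairs it with the bias term $-\tfrac2N\sum_{i,k}q_{ik}b_k$ to form $\mbox{KL}\big(\hat q_k\Arrowvert\exp(2b_k)\big)$, and makes it vanish exactly by choosing the optimal biases \eqref{eq:b_k.Gener}; that is the actual role of $b_k$ in this argument. Finally, your concern about the log-sum-exp normalizer is legitimate but unresolved in your plan; note that the paper's own derivation (Eq.~\eqref{eq:ML.Gener.2}) simply writes $\log p_{ik}$ as $\btheta_k^{T}\bz_i+b_k$, i.e., drops the normalizer, so no cancellation argument of the kind you envisage is carried out there either. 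To repair your proof you would need to adopt the scaled square-completion and the bias/KL device; without them you land on \eqref{eq:R-soft-kmeans} only for the special value $\lambda=1/K$.
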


The function including the first two terms in \eqref{eq:R-soft-kmeans} can be viewed as a {\em soft} K-means objective. The first term corresponds 
exactly to \eqref{eq:objective.k.means}, except that the integer constraints on assignment variables are relaxed: $s_{ik} \in \{0,1\}$ are hard 
assignments (vertices of the simplex) whereas $q_{ik} \in [0,1]$ are soft assignments (within the simplex). The second term in \eqref{eq:R-soft-kmeans} is 
a negative entropy, which favors assignment softness. It reaches its maximum and vanishes (i.e., becomes equal to zero) for hard binary assignments $q_{i,k} \in \{0, 1\}$: 
at the vertices of the simplex, the function including the first two terms in \eqref{eq:R-soft-kmeans} becomes exactly the hard K-means objective in \eqref{eq:objective.k.means}. 
It is also worth noting that optimizing this soft K-means objective, with features $\bz_i$ fixed, yields {\em softmin} K-means updates that are known in the literature; see \cite[p. 289]{MacKay}. 

\begin{proof}
Consider the ADM approximation of the mutual information in Eq. \eqref{eq:objective.MI-ADM-KL}, augmented with regularization term $\lambda \sum_{k=1}^{K} \btheta_k^{T}\btheta_k$. 
Using this approximation, along with the expression of KL in \eqref{eq:KL}, it is easy to see that maximizing the regularized mutual information in \eqref{eq:RMI} can be stated as minimizing the following expression:  
\begin{align}
& \sum_{k=1}^{K}\hat{q}_{k} \log\Big(\hat{q}_{k} \Big)-\frac{1}{N}\sum_{i=1}^{N}\sum_{k=1}^{K}q_{ik} \log\big(p_{ik}\big) \nonumber \\
&+\frac{1}{N} \sum_{i=1}^{N}\sum_{k=1}^{K}q_{ik} \log\Big(\frac{q_{ik}}{p_{ik}}\Big) +\lambda \sum_{k=1}^{K} \btheta_k^{T}\btheta_k \label{eq:ML.Gener.1}\\
=& \sum_{k=1}^{K}\hat{q}_{k} \log\Big(\hat{q}_{k} \Big)+ \frac{1}{N}\sum_{i=1}^{N}\sum_{k=1}^{K}q_{ik} \log\big(q_{ik}\big) \nonumber \\
& - \frac{2}{N} \sum_{i=1}^{N}\sum_{k=1}^{K}q_{ik} \log\Big(\exp(\btheta_k^{T}\bz_i+b_k)\Big) \nonumber \\
&+\lambda \sum_{k=1}^{K} \btheta_k^{T}\btheta_k \label{eq:ML.Gener.2}\\
=&\sum_{k=1}^{K}\hat{q}_{k} \log\Big(\hat{q}_{k} \Big)+ \frac{1}{N}\sum_{i=1}^{N}\sum_{k=1}^{K}q_{ik}  \log\big(q_{ik}\big)  \nonumber \\
&- \frac{2}{N} \sum_{i=1}^{N}\sum_{k=1}^{K} q_{ik} b_k +\frac{1}{N} \Big[\sum_{i=1}^{N}\sum_{k=1}^{K}-2q_{ik} \btheta_k^{T} \bz_i \nonumber \\ 
& +N \lambda \sum_{k=1}^{K} \btheta_k^{T}\btheta_k\Big],\label{eq:ML.Gener.3}
\end{align}
where we replaced $p_{ik}$ by its expression in Eq. \eqref{eq:p_expression}. 
We recall that, in Eq. \eqref{eq:p_expression}, $b_k$ is the bias for cluster $k$.

Notice that the first term in the expression above is the negative entropy of the marginal distribution of labels. Minimization of this term prefers balanced partitions and, in fact, its global minimum is attained for a clustering verifying $\hat{q}_{k} = \frac{1}{K} ~ \forall k$.   
Now, assuming that the empirical label distribution is approximately uniform, \ie
$\hat{q}_{k} \approx \frac{1}{K}$, {we show in the Appendix that:}
\begin{align}\label{eq:approximation.2}
&\sum_{i=1}^{N}\sum_{k=1}^{K}-2q_{ik} \btheta_k^{T} \bz_i +N \lambda \sum_{k=1}^{K} \btheta_k^{T}\btheta_k\approx \nonumber \\
&~~\sum_{i=1}^{N}\sum_{k=1}^{K} q_{ik} \Arrowvert \frac{1}{\sqrt{\lambda K}} \bz_i- \sqrt{\lambda K} \btheta_k\Arrowvert^2 -\frac{1}{\lambda K} \sum_{i=1}^{N} \bz_i^{T} \bz_i.
\end{align}
Using \eqref{eq:approximation.2}, we obtain the following approximation of the regularized mutual information in \eqref{eq:ML.Gener.3}:  
\begin{align}
& \frac{1}{N}\sum_{i=1}^{N}\sum_{k=1}^{K}q_{ik} \log\hat{q}_{k} - \frac{2}{N} \sum_{i=1}^{N}\sum_{k=1}^{K} q_{ik} b_k \nonumber \\ 
& + \frac{1}{N}\sum_{i=1}^{N}\sum_{k=1}^{K}q_{ik}\log\big(q_{ik}\big)  -\frac{1}{N\lambda K} \sum_{i=1}^{N} \bz_i^{T} \bz_i  \nonumber \\ 
& +\frac{1}{N} \sum_{i=1}^{N}\sum_{k=1}^{K} q_{ik} \Arrowvert \frac{1}{\sqrt{\lambda K}} \bz_i- \sqrt{\lambda K} \btheta_k\Arrowvert^2 \label{eq:ML.Gener.2.3} \\
& = \mbox{KL}\Big(\hat{q}_{k} \Arrowvert \exp(2b_k)\Big) + \frac{1}{N}\sum_{i=1}^{N}\sum_{k=1}^{K}q_{ik} \log(q_{ik})  \nonumber \\
& -\frac{1}{N\lambda K} \sum_{i=1}^{N} \bz_i^{T} \bz_i +\frac{1}{N \lambda K} \sum_{i=1}^{N}\sum_{k=1}^{K} q_{ik} \Arrowvert  \bz_i- \btheta_k'\Arrowvert^2, \label{eq:ML.Gener.2.4}
\end{align}
where $\btheta_k'= \lambda K \btheta_k$. Notice that we re-wrote the first two terms in \eqref{eq:ML.Gener.2.3} in the form of a KL divergence. The convenience of this will soon become clear.
The optimization problem we obtained in \eqref{eq:ML.Gener.2.4} can be solved by alternating optimization w.r.t assignments $q_{i,k}$, parameters $\btheta_k'$ and biases $b_k$.  
Since $\mbox{KL}\Big(\hat{q}_{k}\Arrowvert \exp(2b_k)\Big) \ge 0$ and is equal to $0$ if and only the distributions are equal, the optimal $b_k$ can be expressed in closed-form as:
\begin{align}\label{eq:b_k.Gener}
\exp(2b_k)= \hat{q}_{k} ~\Longleftrightarrow ~ b_k= \frac{1}{2}\log\big(\hat{q}_{k}\big).
\end{align}
Substituting these optimal biases back into \eqref{eq:ML.Gener.2.4}, the KL term vanishes and \eqref{eq:ML.Gener.2.4} becomes equivalent to the regularized soft K-means in \eqref{eq:R-soft-kmeans}. 
\end{proof}

We refer to the soft and regularized K-means objective in \eqref{eq:R-soft-kmeans} as SR-K-means. Using this objective jointly with a reconstruction loss, the problem amounts to alternating optimization w.r.t $\bQ$, $\set{\btheta'_1,\ld,\btheta'_K}$ and network parameters $\mcW$.
Setting the partial derivatives of \eqref{eq:R-soft-kmeans} with respect to $\btheta_k'$ and $q_{ik}$ equals to zero, we obtain the corresponding optima 
in closed form as:
\begin{align}\label{eq:centroids.Gener}
\btheta'_k=\frac{\sum_{i=1}^{N}q_{ik}\bz_i}{ \sum_{i=1}^{N}q_{ik}},
\end{align}
and
\begin{align}\label{eq:qik.Gener}
q_{ik} \varpropto \exp \Big(- \frac{1}{\lambda K} \Big\Arrowvert \bz_i-\btheta'_k \Big\Arrowvert^2 \Big)
\end{align}
These updates clearly correspond to the well-known generative K-means algorithm. Eq. \eqref{eq:qik.Gener} uses a {\em softmin} function: it is a soft version of the standard hard (binary) assignments rule of K-means: $q_{ik}$ = 1 if $k = \argmin_{l} \Big\Arrowvert \bz_i-\btheta'_l \Big\Arrowvert$.    
Such soft K-means updates are known in the literature; see \cite[p. 289]{MacKay}. Also, Eq. \eqref{eq:centroids.Gener} is clearly a soft version of the mean updates in 
the standard K-means. Notice that, here, the $\btheta$-updates are in closed-form, unlike earlier for discriminative models DEPICT and MI-ADM, in which $\btheta$-updates
are performed within network training via stochastic gradient descent. It is also worth noting that balancing term  $\sum_{k=1}^{K}\hat{q}_{k} \log\Big(\hat{q}_{k} \Big)$ has
disappeared from our formulation in \eqref{eq:R-soft-kmeans} due to \eqref{eq:b_k.Gener}. This makes sense because it is well known that K-means has an implicit bias towards 
balanced partitions \cite{boykov2015volumetric}.

{
\section{Properties of ADM optimization for the mutual information}
\label{ADM-properties}

In this section, we prove that $\mcI(\mfX,\mfK)$ does not decrease at each two-step iteration of the ADM optimization in \eqref{eq:objective.MI-ADM-KL}, which can be viewed as a {\em bound maximization}\footnote{{Bound optimization, also known as MM (Minorize-Maximization) framework \cite{lange2000optimization, Zhang2007}, is a general principle, which updates the current solution to the next as the optimum of an auxiliary function (a tight lower bound on the original objective). This guarantees that the original objective function we want to maximize does not decrease at each iteration. The principle is widely used in machine learning as one trades a difficult optimization problem with a sequence of easier sub-problems \cite{Zhang2007}. Examples of well-known bound optimizers include expectation maximization (EM) algorithms, the concave-convex procedure (CCCP) \cite{Yuille2001} and submodular-supermodular procedures (SSP) \cite{Narasimhan2005}, among others.}} of the mutual information. We will also discuss the standard constrained-optimization approach based on multiplier-based quadratic penalties and its link to the KL penalties for simplex variables. 

\subsection{Bound-optimization interpretation}

Let $\mcI^{(t)}(\mfX,\mfK)$ denotes the MI at iteration $t$, \ie
\begin{align}
\mcI^{(t)}(\mfX,\mfK)  \triangleq \frac{1}{N} \sum_{i=1}^{N}\sum_{k=1}^{K}{p}_{ik}^{(t)} \log(p_{ik}^{(t)}) - \sum_{k=1}^{K}\hat{p}_{k}^{(t)}\log\Big(\hat{p}_{k}^{(t)} \Big).
\end{align}

\begin{proposition}\label{lemma.J}
$\mcI^{(t)}(\mfX,\mfK)$ is a non-deceasing function of iteration counter $t$ for the two-step 
mixed-variable ADM optimization in \eqref{eq:objective.MI-ADM-KL}.
\end{proposition}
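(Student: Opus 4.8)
The plan is to recognize the two-step scheme as a bound-maximization (MM) procedure for the mutual information, with the KL-penalized objective of \eqref{eq:objective.MI-ADM-KL} serving as the auxiliary (minorizing) function. Write $\mcJ(\bP,\bQ)$ for the objective maximized in \eqref{eq:objective.MI-ADM-KL}, and write $\mcI(\bQ)$ for the mutual-information functional \eqref{eq:MI-2} evaluated at an arbitrary simplex-valued $\bQ$, i.e. $\mcI(\bQ)=\frac{1}{N}\sum_{i,k} q_{ik}\log q_{ik}-\sum_{k}\hat{q}_{k}\log\hat{q}_{k}$. The first and essentially only calculation is to expand the KL term via \eqref{eq:KL} inside $\mcJ$ and collect the $\frac{1}{N}\sum_{i,k}q_{ik}\log q_{ik}$ and $\frac{1}{N}\sum_{i,k}q_{ik}\log p_{ik}$ contributions; a short manipulation then gives the identity
\begin{align}
\mcJ(\bP,\bQ)=\mcI(\bQ)-2\,\mbox{KL}(\bQ\Arrowvert\bP).
\end{align}
This identity is the crux of the argument; everything else is bookkeeping.

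From it, the two defining properties of an MM surrogate follow immediately. Freezing $\bP=\bP^{(t)}$ and regarding $\bQ\mapsto\mcJ(\bP^{(t)},\bQ)$ as a function of the target variable, nonnegativity of the KL divergence gives the minorization $\mcJ(\bP^{(t)},\bQ)\le\mcI(\bQ)$ for every $\bQ$, while the fact that $\mbox{KL}(\bQ\Arrowvert\bP)=0$ precisely when $\bQ=\bP$ gives tightness at the current iterate, $\mcJ(\bP^{(t)},\bP^{(t)})=\mcI(\bP^{(t)})$. Hence $\mcJ(\bP^{(t)},\cdot)$ is a tight lower bound on $\mcI(\cdot)$ touching it at $\bP^{(t)}$, which is exactly the auxiliary-function requirement of the bound-optimization (MM) framework cited in the footnote.

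With the surrogate in hand I would close with the standard MM inequality chain. The target-estimation update \eqref{eq:optimal.q.Disc} maximizes the surrogate over $\bQ$, so $\mcJ(\bP^{(t)},\bQ^{(t)})\ge\mcJ(\bP^{(t)},\bP^{(t)})$ (the feasible choice $\bQ=\bP^{(t)}$ can only be improved upon); the parameter-learning step \eqref{eq:param-learn-depict}, being the cross-entropy maximization of $\frac{1}{N}\sum_{i,k}q^{(t)}_{ik}\log p_{ik}$, drives the posterior onto the targets so that $\bP^{(t+1)}$ reproduces $\bQ^{(t)}$. Combining the ascent property of the $\bQ$-update with minorization and tightness then yields
\begin{align}
\mcI(\bP^{(t+1)})\ge\mcJ(\bP^{(t)},\bQ^{(t)})\ge\mcJ(\bP^{(t)},\bP^{(t)})=\mcI^{(t)}(\mfX,\mfK),
\end{align}
which is the asserted monotonicity.

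The main obstacle is the very first inequality of this last chain, namely transferring the surrogate gain back to $\mcI$ evaluated at the \emph{updated posterior}. Because the posteriors are confined to the logistic-regression family \eqref{eq:p_expression}, the parameter-learning step is in reality a projection of the free target $\bQ^{(t)}$ onto that family rather than an exact match $\bP^{(t+1)}=\bQ^{(t)}$; the clean inequality holds in the idealized regime where the posterior model is expressive enough (or where $\bP$ is treated as a free simplex variable), and otherwise only up to the projection gap, which is precisely the sense in which the scheme is an \emph{approximate} ADM. A secondary, milder gap is that \eqref{eq:optimal.q.Disc} is derived from an approximated gradient, so one must verify that it is a genuine ascent step for $\mcJ(\bP^{(t)},\cdot)$ relative to the baseline $\bQ=\bP^{(t)}$. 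Both gaps are controlled under the balanced-partition and expressiveness assumptions already invoked elsewhere in the paper, and I would state the result with that understanding.
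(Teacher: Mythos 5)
Your proof is correct and follows essentially the same bound-optimization route as the paper: the identity $\mcJ(\bP,\bQ)=\mcI(\bQ)-2\,\mbox{KL}(\bQ\Arrowvert\bP)$ you derive is exactly the surrogate the paper manipulates implicitly through inequalities \eqref{eq:objective.MI-ADM-KL-4}--\eqref{eq:time.p.t+1}, and both arguments close the chain by invoking $\bP^{(t+1)}=\bQ^{(t)}$. The only difference is presentational: the paper asserts that exact match as a ``fact'' without comment, whereas you correctly flag it (together with the approximate-gradient nature of the $\bQ$-update) as the idealization on which the monotonicity claim actually rests.
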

\begin{proof}

Given the network parameter at time $t$ or, equivalently, the resulting posteriors $\bP^{(t)}$, the target estimation step at time $t$ explicitly implies:
\begin{align}\label{eq:objective.MI-ADM-KL-4}
& \frac{1}{N} \sum_{i=1}^{N}\sum_{k=1}^{K}{q}_{ik}^{(t)} \log(p_{ik}^{(t)}) - \sum_{k=1}^{K}\hat{q}_{k}^{(t)}\log\Big(\hat{q}_{k}^{(t)} \Big) -  \mbox{KL}(\bQ^{(t)}\Arrowvert \bP^{(t)}) \nonumber \\
& \ge \frac{1}{N} \sum_{i=1}^{N}\sum_{k=1}^{K}{q}_{ik} \log(p_{ik}^{(t)}) - \sum_{k=1}^{K}\hat{q}_{k}\log\Big(\hat{q}_{k} \Big) -  \mbox{KL}(\bQ\Arrowvert \bP^{(t)}), 
\end{align}
for all $\bQ$. Applying \eqref{eq:objective.MI-ADM-KL-4} to $\bQ=\bP^{(t)}$, we have the following upper bound on the mutual information at iteration $t$:
\begin{align}\label{eq:time.t}
& \frac{1}{N} \sum_{i=1}^{N}\sum_{k=1}^{K}{q}_{ik}^{(t)} \log(p_{ik}^{(t)}) - \sum_{k=1}^{K}\hat{q}_{k}^{(t)}\log\Big(\hat{q}_{k}^{(t)} \Big) -  \mbox{KL}(\bQ^{(t)}\Arrowvert \bP^{(t)}) \nonumber \\
& \ge \mcI^{(t)}(\mfX,\mfK).
\end{align}
Using the estimated $\bQ^{(t)}$, the parameter-learning step at time $t+1$ implies:
\begin{align}\label{eq:objective.MI-ADM-KL-5}
& \frac{1}{N} \sum_{i=1}^{N}\sum_{k=1}^{K}{q}_{ik}^{(t)} \log(p_{ik}^{(t+1)}) - \sum_{k=1}^{K}\hat{q}_{k}^{(t)}\log\Big(\hat{q}_{k}^{(t)} \Big)
\nonumber \\
&-  \mbox{KL}(\bQ^{(t)}\Arrowvert \bP^{(t+1)}) \nonumber \\
& \ge \frac{1}{N} \sum_{i=1}^{N}\sum_{k=1}^{K}{q}_{ik}^{(t)} \log(p_{ik}) - \sum_{k=1}^{K}\hat{q}_{k}^{(t)}\log\Big(\hat{q}_{k}^{(t)} \Big) 
 \nonumber  \\ 
 &-  \mbox{KL}(\bQ^{(t)}\Arrowvert \bP), 
\end{align}
for all $\bP$. Applying inequality \eqref{eq:objective.MI-ADM-KL-5} to $\bP=\bP^{(t)}$, and combining the result with inequality \eqref{eq:time.t}, we 
obtain the following upper bound on the mutual information at iteration $t$: 
\begin{align}\label{eq:time.p.t+1}
& \frac{1}{N} \sum_{i=1}^{N}\sum_{k=1}^{K}{q}_{ik}^{(t)} \log(p_{ik}^{(t+1)}) - \sum_{k=1}^{K}\hat{q}_{k}^{(t)}\log\Big(\hat{q}_{k}^{(t)} \Big)  \nonumber \\
& -  \mbox{KL}(\bQ^{(t)}\Arrowvert \bP^{(t+1)}) \ge \mcI^{(t)}(\mfX,\mfK).
\end{align}
Finally, using the fact that $ \bP^{(t+1)}=\bQ^{(t)}$, \eqref{eq:time.p.t+1} can be rewritten as:
\begin{align}\label{eq:time.t+1}
\mcI^{(t+1)}(\mfX,\mfK) \ge \mcI^{(t)}(\mfX,\mfK), 
\end{align}
which terminates the proof.

\end{proof}

{It is worth noting that the generative SR-Kmeans procedure discussed earlier can also be viewed as a bound optimizer for the mutual information in Eq. \eqref{eq:RMI}, but it uses a bound (auxiliary function) different from the discriminative ADM procedure. In the discriminative ADM procedure, we optimizes directly Eq. \eqref{eq:ML.Gener.1}, with the assignment-variable updates derived from setting the approximate gradient of Eq. \eqref{eq:ML.Gener.1} with respect to assignment variables $q_{ik}$ equal to zero. In the generative SR-Kmeans procedure, we still optimize \eqref{eq:ML.Gener.1} with respect to $q_{ik}$, but in an {\em indirect} way using the equivalent K-means objective in Eq. \eqref{eq:R-soft-kmeans} and a two-step process: one step updating cluster prototypes (means) with Eq. (19) and the other updating assignment variables. In fact, for the standard K-means procedure, one can show that this two-step process with prototype updates is a bound optimization\footnote{{The prototype updates correspond to building a bound (auxiliary function) on high-order K-means objective expressed solely as a function of the assignment variables (i.e., the prototypes in the K-means objective are expressed with assignment variables).}}; See, for instance, Theorem 1 in \cite{kernelcut}. Therefore, as optimizing the ADM version in Eq. \eqref{eq:ML.Gener.1} is a bound optimizer for the mutual information (Proposition \ref{lemma.J}), the SR-Kmeans procedure can also be viewed as a bound optimizer for the mutual information, but with a different auxiliary function, as it uses a prototype-based bound on Eq. \eqref{eq:ML.Gener.1}.}

\subsection{Other optimization alternatives and the link between KL and quadratic penalties for simplex variables}

In this section, we discuss other alternatives for solving constrained optimization problem \eqref{eq:objective.MI-ADM}. In fact, the standard alternating 
direction method of multipliers (ADMM) method \cite{Boyd_EE364b} solves \eqref{eq:objective.MI-ADM} iteratively via the following updates:
\begin{align}
\Phi^{(t+1)}&= \argmax_{\Phi} ~ L\big(\Phi,\bQ^{(t)},\{\lambda_{i,k}^{(t)}\}_{\tiny{1\le i\le N}}^{\tiny{1\le k\le K}},\rho \big), \label{eq:ADMM.p} \\
\bQ^{(t+1)}&= \argmax_{\bQ} ~ L\big(\Phi^{(t+1)},\bQ,\{\lambda_{i,k}^{(t)}\}_{\tiny{1\le i\le N}}^{\tiny{1\le k\le K}},\rho\big), \label{eq:ADMM.q}\\
\lambda_{i,k}^{(t+1)} &= \lambda_{i,k}^{(t)} + \rho \cd  (q_{ik}-p_{ik}), \quad {\tiny{1\le i\le N}}, \quad {\tiny{1\le k\le K}}, \label{eq:ADMM.multiplier}
\end{align}
where $\rho>$ is called the penalty parameter, $\{\lambda_{i,k}\}_{\tiny{1\le i\le N}}^{\tiny{1\le k\le K}}$ are the Lagrange multipliers and $L$ is the augmented Lagrangian function defined as:
\begin{align}
\label{eq:augmented-Lagrangian}
&L\big(\Phi,\bQ,\{\lambda_{i,k}\}_{\tiny{1\le i\le N}}^{\tiny{1\le k\le K}},\rho\big) = \frac{1}{N} \sum_{i=1}^{N}\sum_{k=1}^{K}{q}_{ik} \log(p_{ik}) \nonumber \\
&~~ - \sum_{k=1}^{K}\hat{q}_{k}\log\Big(\hat{q}_{k} \Big) - \sum_{i=1}^{N}\sum_{k=1}^{K} \lambda_{i,k} \big(q_{ik}-p_{ik}\big) -\frac{\rho}{2}\Arrowvert \bQ-\bP\Arrowvert ^2_2
\end{align}
In \eqref{eq:augmented-Lagrangian}, equality constraint $\bQ = \bP$ is handled via a multiplier-based quadratic penalty, i.e., the last two terms in \eqref{eq:augmented-Lagrangian}. The use of the KL penalty has important computational advantages over this standard augmented Lagrangian formulation. First, 
it is not straightforward to solve \eqref{eq:ADMM.q} analytically by setting to zero the partial derivative of $L$, which is given by: 
\begin{align}
\label{eq:augmented-Lagrangian-NC}
& \frac{\partial L\big(\Phi^{(t+1)},\bQ,\{\lambda_{i,k}^{(t)}\}_{\tiny{1\le i\le N}}^{\tiny{1\le k\le K}},\rho\big) }{ \partial q_{ik}} =  \frac{1}{N} \log\big(p_{ik}^{(t+1)}\big) \nonumber \\
&~~~ -\log\big( \frac{1}{N} \sum_{i=1}^{N} {q}_{ik} \big) -  \frac{1}{N} - \lambda_{i,k}^{(t)}-\rho \cd (q_{ik}-p_{ik}^{(t+1)})
\end{align}
Here, a numerical method, with additional inner iterations, might be needed. We can use a faster, penalty-based version of \eqref{eq:augmented-Lagrangian} by removing the Lagrange-multiplier term (third term). This removes point-wise multiplier updates \eqref{eq:ADMM.multiplier}, but the quadratic penalty would still require inner iterations for solving \eqref{eq:ADMM.q}.   
Second, while quadratic penalties are more standard in the general context of constrained optimization, the KL penalty we have in \eqref{eq:objective.MI-ADM-KL} has important computational advantages in the case of simplex constraints. In fact, the KL penalty 
in \eqref{eq:objective.MI-ADM-KL} has a negative-entropy barrier term, which completely removes extra 
Lagrangian-dual iterations/projections to handle simplex constraints $\bq_i^t {\mathbf 1} = 1$ and $\bq_i \geq 0 ~ \forall i$. 
Such a barrier forces each assignment variable to be non-negative, which removes the need for extra dual variables for 
constraints $\bq_i \geq 0$, and conveniently yields closed-form updates for the dual variables of constraints $\bq_i^t {\mathbf 1} = 1$.
These computational advantages over quadratic penalties are important, more so when dealing with large data sets. It is worth noting that, for dealing with simplex constraints, KL-based penalties are common in the context of Bregman-proximal optimization \cite{Yuan2017}, with established computational and memory advantages over quadratic penalties \cite{Yuan2017}. However, to our knowledge, they are less common in the clustering literature.

Moreover, for simplex variables, there is an interesting link between KL and quadratic penalties, which comes directly from the {\em Pinsker's inequality} \cite{csiszar_korner_2011}. In fact, 
For any $\bP$ and $\bQ$ containing probability simplex vectors, Pinsker's inequality states that the quadratic penalty is upper-bounded by KL (up to a multiplicative constant):  
\begin{equation}
\Arrowvert \bQ-\bP\Arrowvert ^2_2 \leq 2 \, \mbox{KL}(\bQ \Arrowvert \bP)
\end{equation}
Therefore, for simplex variables, minimizing KL corresponds also to minimizing an upper bound on the quadratic penalty.  
}

\section{Experiments}\label{sec.experiments}
\subsection{Reconstruction loss and {implementation details}}

We adopted the reconstruction loss and DNN architecture proposed recently in \cite{Dizaji17} for our experiments. 
The architecture consists of a multi-layer convolutional denoising auto-encoder with stridded convolutional layers in the decoder part. It is composed of three components:
\begin{itemize}
\item
A corrupted encoder, which maps the noisy input into the embedding space. The output of each noisy encoder layer is given by:
\begin{align}
\hat{\bz}^{l}=Dropout\Big(g\big(\W^{l}_\tr{e} \hat{\bz}^{l-1}\big)\Big),
\end{align}
where $Dropout(\cd)$ is a stochastic mapping that randomly sets a portion of its inputs to zero \cite{Srivastava14}, $g$ is the activation function 
and $\W^{l}_\tr{e}$ denotes the weights of the $l$-th encoder. $L$ denotes the depth of the auto-encoder.
\item A clean decoder, which follows the corrupted encoder. The reconstruction of each layer is defined as:
\begin{align}
\tilde{\bz}^{l-1}=g\Big(\W^{l}_\tr{d} \tilde{\bz}^{l-1}\Big)
\end{align}
where $\W^{l}_\tr{d}$ are the weights of the $l$-th decoder layer.
\item A clean encoder, which has the same weights as the corrupted one, \ie the output of the $l$-th layer is expressed as:
\begin{align}
\bz^{l}=\W^{l}_\tr{e} (\bz^{l-1})
\end{align}
\end{itemize}
We used the rectified linear units (ReLUs) \cite{Nair10} as activation functions. 
For further details on the architecture, refer to \cite[Sec. 3.2] {Dizaji17}. We note that the adopted architecture is similar to the Ladder network \cite{Valpola15}, where the clean pathway is used for prediction while the corrupted one guaranties that the network is noise-invariant.

As in \cite{Dizaji17}, and in order to avoid over-fitting, we add a reconstruction loss function to our objectives MI-ADM in Eq. \eqref{eq:objective.MI-ADM-KL-2} and 
in Eq. \eqref{eq:R-soft-kmeans}: 
\begin{align}
{\cal R}(\mcZ) =   \frac{1}{N} \sum_{i=1}^{N}\sum_{l=0}^{L-1} \frac{1}{|\bz_i^l|} \Arrowvert \bz_i^l-\tilde{\bz}_i^l \Arrowvert^2,
\end{align}
where $|\bz_i^l|$ is the output size of the $l$-th layer.
In the experiments described below, MI-ADM refers to the process that alternates the target updates of Eq. \eqref{eq:optimal.q.Disc} with learning network parameters that 
optimize the following loss:
\begin{align}\label{eq:maximization.Disc.2}
\min_{\Phi}~-\frac{1}{N}\sum_{i=1}^{N}\sum_{k=1}^{K}q_{ik} \log p_{ik} + {\cal R}(\mcZ)  
\end{align}
SR-K-means refer to the process that alternates the soft K-means updates in Eqs. \eqref{eq:centroids.Gener} and \eqref{eq:qik.Gener} and learning network parameters that 
optimize the following loss:   
\begin{align}\label{eq:maximization.Gener.3}
\min_{\mcW}~\frac{1}{N \lambda K} \sum_{i=1}^{N}\sum_{k=1}^{K} q_{ik} \Arrowvert \bz_i- \btheta_k'\Arrowvert^2-\frac{1}{N\lambda K} \sum_{i=1}^{N} \bz_i^{T} \bz_i 
+ {\cal R}(\mcZ) 
\end{align}
Note that the deep network parameters $\mcW$ are firstly initialized considering the auto-encoder only \cite{Dizaji17,Xie16,glorot10a}: $\min_{\Phi}~{\cal R}(\mcZ)$. 
Then the initial features are clustered via soft K-means to obtain the initial targets $\bQ$. {Regarding the optimization method and hyper-parameter selection, we use the ones adopted in \cite{Dizaji17}, except for new regularization parameter $\lambda$, which we introduced in \eqref{eq:RMI}. 
Namely, as stochastic optimizer, we adopt Adam \cite{kingma2014adam} with default parameters $\beta_1=0.9$, $\beta_2=0.999$ and $\epsilon=1\e^{-8}$.
We initialize weights using Xavier approach \cite{glorot10a}. The mini-batch size, learning rate and dropout parameter are set to $100$, $1\e^{-3}$ and $0.1$, respectively. Regarding $\lambda$, as models \eqref{eq:MI} and \eqref{eq:RMI} become equivalent when $\lambda \rightarrow 0$, we tested the values $10^{i}$, $i=-1,\ld,-5$. We found that $\lambda = 10^{-4}$ yields the best performance for model \eqref{eq:RMI}.   
%, and choose to show the results of one value ($\lambda^* = 10^{-4}$) that performs well across all the considered data sets, $\ie \lambda^* =\argmax \sum_{\tr{dataset}} \tr{ACC}(\lambda,\tr{dataset})$ where $\tr{ACC}(\lambda,\tr{dataset})$ is the clustering accuracy using the parameter $\lambda$ for the dataset.
}

\subsection{Results}

\subsubsection{Data sets}

In order to confirm the theoretical link in Proposition \ref{prop:link-to-K-means} between discriminative model MI-ADM in \eqref{eq:objective.MI-ADM-KL-2} and generative model 
SR-K-means in \eqref{eq:R-soft-kmeans}, we evaluated them on two handwriting datasets (USPS and MNIST) and three face datasets (Youtube-Face, CMU-PIE and FRGG \cite{Yang16}). Table~\ref{tab.datasets} presents a summary of the statistics of the data sets.

\begin{table*}[t]
  \caption{Description of the datasets}
  \label{tab.datasets}
  \centering
  \begin{tabular}{lllllll}
\toprule
   Dataset   & \# Samples & \# Classes & \#Dimensions & \% of smallest class & \% of largest  class\\
\midrule
  USPS  &  $11000$ & $10$ & $1\times 16\times 16$ & 10 \% & 10 \% \\
MNIST-test     & $10000$ & $10$ & $1\times 28\times 28$ & 8.92 \% & 11.35 \% \\
 MNIST-full     & $79000$ & $10$ & $1\times 28\times 28$ & 9.01 \% & 11.25 \% \\
 Youtube-Face (YTF)     & $10000$ & $41$ & $3\times 55\times 55$ & 0.31 \% & 6.94 \%\\
 CMU-PIE  & $2856$ & $68$ & $1\times 32\times 32$ & 1.47 \% & 1.47 \%  \\
 FRGC  & $2462$ & $20$ & $3\times 32\times 32$ &  0.24\% & 10.51 \%  \\
\bottomrule
  \end{tabular}
\end{table*}
\subsubsection{Performance metrics}
We adopt two standard unsupervised evaluation metrics: the accuracy (ACC) and the normalized mutual information (NMI). ACC captures the best matching between the unsupervised clustering results and the ground truth \cite{Kuhn55}. NMI translates the similarity between pairs of clusters, and is invariant w.r.t permutations \cite{Xu03}.

\subsubsection{Evaluation of clustering algorithms}

Table~\ref{tab.datasets} reports the results\footnote{Our code is publicly available at: \\ \url{https://github.com/MOhammedJAbi/SoftKMeans}} of discriminative model MI-ADM in \eqref{eq:objective.MI-ADM-KL-2} and generative model SR-K-means in \eqref{eq:R-soft-kmeans}. We also include the results of several related models: (1) the DEPICT model \cite{Dizaji17} based on KL and logistic regression posteriors, which achieves a state-of-the-art performance on MNIST; (2) DEC \cite{Xie16}, also a KL-based approach assuming $t$-distribution between embedded points and cluster prototypes; and DCN \cite{Yang17}, which optimizes a loss containing a hard K-means term and a reconstruction term.

{The numerical results show that MI-ADM and SR-K-means algorithms may yield comparable results even for unbalanced data sets, \eg YTF. We recall here that our analysis was done assuming the clusters are balanced.} Also, notice that MI-ADM and DEPICT have approximately the same performance, confirming our earlier discussion: MI-ADM in \eqref{eq:objective.MI-ADM-KL-2} can be viewed as an approximation of DEPICT in \eqref{eq:kl.disc}. The additional entropy term in \eqref{eq:objective.MI-ADM-KL-2}, ${\mbox H}(\bQ)$, has almost no effect on the results. Finally, notice the substantial difference in performance ($11\%$) between our regularized and soft K-means and DCN \cite{Yang17}, which is based on a hard K-means loss. 

\begin{table*}[t]
  \caption{Comparison of clustering algorithms on four date sets based on accuracy and normalized mutual information. The results of SR-K-means algorithm are obtained using $\lambda=10^{-4}$. (-), (*), ($^\dag$) stands for ``not reported'' and ``reported" in \cite{Yang17} and\cite{Dizaji17}, respectively.}
  \label{tab:results}
  \centering
  \begin{tabular}{lllllllllllll}
 \toprule
  Dataset     &  \multicolumn{2}{c}{USPS}     &  \multicolumn{2}{c}{MNIST-test } &  \multicolumn{2}{c}{MNIST-full } & \multicolumn{2}{c}{YTF }  & \multicolumn{2}{c}{CMU-PIE } & \multicolumn{2}{c}{FRGC}\\
\midrule
& NMI & ACC & NMI & ACC & NMI & ACC & NMI & ACC & NMI & ACC & NMI & ACC  \\
\midrule
MI-ADM & 0.948 & 0.979 &  0.885 & 0.871 & 0.922 & 0.969 & 0.801 & 0.606 & 0.965 & 0.858 & 0.580 & 0.431\\
SR-K-means & 0.936 & 0.974 &  0.873 & 0.863 & 0.866 & 0.939 & 0.806 & 0.605 & 0.945 & 0.902 & 0.487 & 0.413\\
%dAE + hard $K$-means & 0.930 & 0.972 & 0.860 & 0.855 &  0.791 & 0.887 & 0.803 & 0.605 & 0.915 & 0.864 & 0.558 & 0.434\\ 
DEPICT \cite{Dizaji17} & 0.945 & 0.978 &  0.886 & 0.872 & 0.925 & 0.971 &0.802 & 0.611 & 0.964 & 0.850 & 0.583 & 0.432 \\
DCN  (K-means based) \cite{Yang17}& - & - &  - & - & 0.81$^{*}$ & 0.83$^{*}$ & - & - & - & - & - & -\\
DEC (KL based) \cite{Xie16} & 0.586$^\dag$ & 0.619$^\dag$ & 0.827$^\dag$ & 0.859$^\dag$ & 0.816$^\dag$ & 0.844$^\dag$ & 0.446$^\dag$ & 0.371$^\dag$ & 0.924$^\dag$ & 0.801$^\dag$ & 0.505$^\dag$ & 0.378$^\dag$\\
\bottomrule
  \end{tabular}
\end{table*}

{

%%%%%%%%%%%%%%%%%%%%%%%%%%%%%%% Table %%%%%%%%%%%%%%%%%%%%%%%%%%%%%%%%%%%%%%%%%%%%%%%%%%%%%%%%%%%%%%%

\begin{table*}[t]
  \caption{{Comparison of MI-ADM and MI-D on four datasets (accuracy and normalized mutual information).}} 
 % {(to be deleted: the difference between MI-D and MI-D* is how the initialization is done. Only the initialization of the auto-encoder is performed in MI-D* while we add an additional initialization; the one used also in MI-ADM; in MI-D. Thus, it is fair to compare MI-ADM with MI-D and not MI-ADm with MI-D* (which should be removed before submission).)
 %}}
  \label{tab:results.ADM.adv}
  \centering
  \begin{tabular}{lllllllllllll}
 \toprule
 { Dataset}     &  \multicolumn{2}{c}{{USPS}}     &  \multicolumn{2}{c}{{MNIST-test}} &  \multicolumn{2}{c}{{MNIST-full}} & \multicolumn{2}{c}{{YTF}}  & \multicolumn{2}{c}{{CMU-PIE}} & \multicolumn{2}{c}{{FRGC}}\\
\midrule
& {NMI} & {ACC} & {NMI} & {ACC} & {NMI} & {ACC} & {NMI} & {ACC} & {NMI} & {ACC} & {NMI} & {ACC}  \\
\midrule
{MI-ADM} & {0.948} & {0.979} &  {0.885} & {0.871} & {0.922} & {0.969} & {0.801} & {0.606} & {0.965} & {0.858} & {0.580} & {0.431} \\
{MI-D} & {0.948} & {0.979} &  {0.880} & {0.867} & {0.921} & {0.967} & {0.800} & {0.616} & {0.867} & {0.705} & {0.444} & {0.322} 
\\
%MI-D* {(to be deleted)}& 0.498 & 0.528 &  0.528 & 0.478 & 0.908 & 0.957 & 0.801 & 0.618 & 0.867 & 0.704 & 0.442 & 0.327\\
\bottomrule
  \end{tabular}
\end{table*}

\subsubsection{Direct maximization of the mutual information}

%%%%%%%%%%%%%%%%%%%%%%%%%%%%%%% new section %%%%%%%%%%%%%%%%%%%%%%%%%%%%%%%%%%%%%%%
%%%%%%%%%%%%%%%%%%%%%%%%%%%%%%%%%%%%%%%%%%%%%%%%%%%%%%%%%%%%%%%%%%%%%%%%%%%%%%%%%%%%

In this section, we report the clustering results when mutual information $\mcI(\mfX,\mfK)$ is maximized directly,  as proposed in \cite{Krause10,Hu17,Bridle92}, which is different from the two-step ADM approach in \eqref{eq:q.DEPICT} \& \eqref{eq:param-learn-depict}. This amounts to solving directly the following optimization problem via SGD:
%\begin{align}\label{eq:op.MI}
$\max_{\Phi}~ \mcI(\mfX,\mfK)$.
%\end{align}
We refer to such direct maximization as MI-D, and report the results\footnote{{Again, here, and as done for MI-ADM and SR-K-means, we added the reconstruction term when learning the network parameters, \ie we solve $\min_{\Phi}-\mcI(\mfX,\mfK) + {\cal R}(\mcZ)$ in MI-D.}} in Table \eqref{tab:results.ADM.adv}. 
As we can see, while the results of the two algorithms are practically the same for the majority of considered datasets, the MI-ADM optimizer outperforms MI-D on FRGC and CMU-PIE in terms of performance measures NMI and ACC.  %{This can be observed in \figref{fig.MI.comparison}, where alternating the two steps in MI-ADM converges to a "better" maximum of the objective function for FRGC.However, we should note that, similarly to what we observed in Table \ref{tab:results.ADM.adv} for NMI/ACC performances, the optimization performances of  MI-ADM and MI-D for the majority of the data sets are quite similar. }
Regarding optimization performance, MI-ADM converges to a local optimum that is ``better'' than the one obtained with MI-D on the FRGC data set, but this is not the case for MNIST, for instance, where both methods converge approximately to the same solution. Our results may not be enough to claim that MI-ADM is a better optimizer than MI-D, in general, as none of the two optimization methods (MI-ADM and MI-D) provides optimality guarantee or bounds for highly non-convex problems involving deep networks. However, our results are consistent with recent optimization works, \eg \cite{Taylor2016,Wang2019}, which showed that variants of ADMM could be effective alternatives to SGD for supervised deep learning problems. Those differences in optimization performances do not provide a full explanation of the fact that MI-ADM outperforms MI-D on FRGC and CMU-PIE. In fact, to the best of our knowledge, there is no rigorous quantification of how maximizing the mutual information, \ie $\mcI(\mfX,\mfK)$, which is computed based on the predictions, is maximizing the commonly used performance metrics, \ie the  accuracy (ACC) and normalized  mutual  information (NMI), which are both computed based on the true ground-truth labels, for unsupervised clustering problems. 
 
Finally, notice that \figref{fig.MI.comparison} confirms Proposition \ref{lemma.J} experimentally, i.e. the MI does not increase during the iterations of KL-based ADM, similarly to a direct SGD maximization. We used the FRGC data set for this plot as a typical example, but the MI evolution during the iterations of MI-ADM follows the same form for the remaining data sets.

%A very related remark was already made made in (\cite{}Prof. Ismail: please add here your work where you have the same behaviour.) where the ADM based algorithm shows a faster convergence to the optimum.

%%%%%%%%%%%%%%%%%%%%%%%%%%%%%%%% Figures %%%%%%%%%%%%%%%%%%%%%%%%%%%%%%%%%%%%%%%%%%%%%%%%%%%% 

\begin{figure}[tb] 

\psfrag{iteration}[cc][cc][0.5]{$\mcI(\mfX,\mfK)$}

\begin{center}

\scalebox{1}{\includegraphics[width=0.95\linewidth]{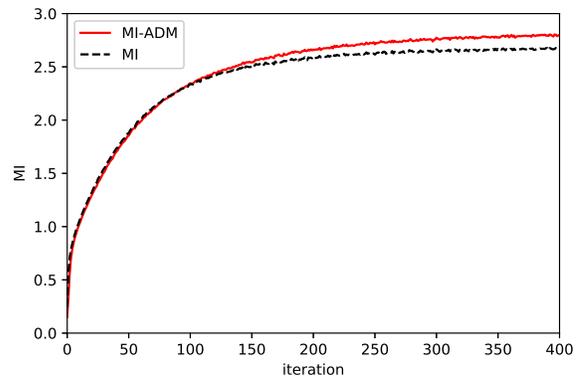}}

\caption{{Evolution of the MI, $\mcI(\mfX,\mfK)$, during the iterations of MI-ADM and MI-D algorithms for the FRGC data set.}}\label{fig.MI.comparison}%+ {\cal R}(\mcZ) 
\end{center}
\end{figure}
}

\section{Conclusion}\label{sec.conclusion}

We showed that several prevalent state-of-the-art models for deep clustering are equivalent to K-means under mild conditions and commonly used posterior models and parameter regularization. We proved that, for the standard logistic regression posteriors, maximizing the $L_2$ regularized mutual information via the alternating direction method (ADM) is equivalent to a soft and regularized K-means loss. Our theoretical analysis not only connected directly several recent discriminative models to K-means, but also led to a new soft and regularized deep K-means algorithm, which gave competitive results on several image clustering benchmarks. Furthermore, our result suggests several interesting extensions for future works. For instance, it is well known that simple parametric prototypes such as the means, as in K-means, may not be good representatives of manifold-structured and high-dimensional inputs such as images. Investigating other prototype-based objectives such as K-modes \cite{Ziko2018} may provide better representatives of the data. Also, for manifold-structured inputs, investigating pairwise clustering objectives such as normalized cut \cite{Shaham18}, in conjunction with reconstruction losses, might be more appropriate for deep image clustering. Namely, it is interesting to see how using the loss function of the aforementioned algorithms, \eg \cite[Eq. (1)] {Ziko2018} and \cite[Eq. (3)] {Shaham18}, instead of the K-means components in \eqref{eq:maximization.Gener.3}, will affect the results. Also, it is worthy to investigate a possible link between MI and the loss functions of those algorithms.

As a final comment, we add here the KL divergence to enforce constraint $\bQ=\bP$ when going from \eqref{eq:objective.MI-ADM} to \eqref{eq:objective.MI-ADM-KL}. As a a future work, it will be interesting to analyze the optimality and convergence of MI-ADM if different distance measures, \eg Bhattacharyya measures family \cite{Kailath67}, and penalty methods for constrained optimization, \eg generalized quadratic penalty \cite{BERTSEKAS76}, are adopted.

%%%%%%%

\begin{appendices}
\section{}
{ This appendix derives the approximation in \eqref{eq:approximation.2}. Assuming that the empirical label distribution is approximately uniform, \ie$\hat{q}_{k} = \frac{1}{N}\sum_{i=1}^{N} q_{ik}\approx  \frac{1}{K}  \iff \frac{K}{N}\sum_{i=1}^{N} q_{ik}\approx 1 ~~\forall k $, we have
\begin{align}
&\sum_{i=1}^{N}\sum_{k=1}^{K}-2q_{ik} \btheta_k^{T} \bz_i +N \lambda \sum_{k=1}^{K} \btheta_k^{T}\btheta_k \\
&~~ \approx \sum_{i=1}^{N}\sum_{k=1}^{K}-2q_{ik} \btheta_k^{T} \bz_i +N \lambda \sum_{k=1}^{K} \Big(\frac{K}{N}\sum_{i=1}^{N} q_{ik}\Big) \btheta_k^{T}\btheta_k \\
&~~ = \sum_{i=1}^{N}\sum_{k=1}^{K}-2q_{ik} \btheta_k^{T} \bz_i +K \lambda \sum_{k=1}^{K} \sum_{i=1}^{N} q_{ik} \btheta_k^{T}\btheta_k \\
&~~ = \sum_{i=1}^{N}\sum_{k=1}^{K}-2q_{ik} \btheta_k^{T} \bz_i +K \lambda \sum_{i=1}^{N} \sum_{k=1}^{K}  q_{ik} \btheta_k^{T}\btheta_k \nonumber \\
&~~~ +\frac{1}{\lambda K} \sum_{i=1}^{N} \bz_i^{T} \bz_i -\frac{1}{\lambda K} \sum_{i=1}^{N} \bz_i^{T} \bz_i \\
&~~ = \sum_{i=1}^{N}\sum_{k=1}^{K}-2q_{ik} \btheta_k^{T} \bz_i +K \lambda \sum_{i=1}^{N} \sum_{k=1}^{K}  q_{ik} \btheta_k^{T}\btheta_k \nonumber \\
&~~~ +\frac{1}{\lambda K} \sum_{i=1}^{N} \Big(\sum_{k=1}^{K} q_{ik}\Big)  \bz_i^{T} \bz_i -\frac{1}{\lambda K} \sum_{i=1}^{N} \bz_i^{T} \bz_i \\
&~~ = \sum_{i=1}^{N}\sum_{k=1}^{K}q_{ik} \Big (-2\btheta_k^{T} \bz_i +\big(\sqrt{K \lambda}\big)^2\btheta_k^{T}\btheta_k + \big(\frac{1}{\sqrt{\lambda K}}  \big)^2 \bz_i^{T} \bz_i  \Big)\nonumber \\
&~~~  -\frac{1}{\lambda K} \sum_{i=1}^{N} \bz_i^{T} \bz_i \\
&~~ = \sum_{i=1}^{N}\sum_{k=1}^{K} q_{ik} \Arrowvert \frac{1}{\sqrt{\lambda K}} \bz_i- \sqrt{\lambda K} \btheta_k\Arrowvert^2 -\frac{1}{\lambda K} \sum_{i=1}^{N} \bz_i^{T} \bz_i.
\end{align}
}
%%%%%%%
\end{appendices}

\bibliography{biblio}

% Generated by IEEEtran.bst, version: 1.14 (2015/08/26)
\begin{thebibliography}{10}
\providecommand{\url}[1]{#1}
\csname url@samestyle\endcsname
\providecommand{\newblock}{\relax}
\providecommand{\bibinfo}[2]{#2}
\providecommand{\BIBentrySTDinterwordspacing}{\spaceskip=0pt\relax}
\providecommand{\BIBentryALTinterwordstretchfactor}{4}
\providecommand{\BIBentryALTinterwordspacing}{\spaceskip=\fontdimen2\font plus
\BIBentryALTinterwordstretchfactor\fontdimen3\font minus
  \fontdimen4\font\relax}
\providecommand{\BIBforeignlanguage}[2]{{%
\expandafter\ifx\csname l@#1\endcsname\relax
\typeout{** WARNING: IEEEtran.bst: No hyphenation pattern has been}%
\typeout{** loaded for the language `#1'. Using the pattern for}%
\typeout{** the default language instead.}%
\else
\language=\csname l@#1\endcsname
\fi
#2}}
\providecommand{\BIBdecl}{\relax}
\BIBdecl

\bibitem{Trigeorgis2014}
G.~Trigeorgis, K.~Bousmalis, S.~Zafeiriou, and B.~Schuller, ``A deep semi-nmf
  model for learning hidden representations,'' in \emph{International
  Conference on Machine Learning (ICML)}, 2014, pp. 1692--1700.

\bibitem{Caron2018}
M.~Caron, P.~Bojanowski, A.~Joulin, and D.~Matthijs, ``Deep clustering for
  unsupervised learning of visual features,'' in \emph{European Conference On
  Computer Vision (ECCV)}, 2018, pp. 1692--1700.

\bibitem{Yang17}
B.~Yang, X.~Fu, N.~D. Sidiropoulos, and M.~Hong, ``Towards k-means-friendly
  spaces: Simultaneous deep learning and clustering,'' in \emph{International
  Conference on Machine Learning (ICML)}, 2017, pp. 3861--3870.

\bibitem{Dizaji17}
K.~Ghasedi~Dizaji, A.~Herandi, C.~Deng, W.~Cai, and H.~Huang, ``Deep clustering
  via joint convolutional autoencoder embedding and relative entropy
  minimization,'' in \emph{International Conference on Computer Vision (ICCV)},
  2017, pp. 5747--5756.

\bibitem{Hu17}
W.~Hu, T.~Miyato, S.~Tokui, E.~Matsumoto, and M.~Sugiyama, ``Learning discrete
  representations via information maximizing self-augmented training,'' in
  \emph{International Conference on Machine Learning (ICML)}, 2017, pp.
  1558--1567.

\bibitem{Jiang16}
Z.~Jiang, Y.~Zheng, H.~Tan, B.~Tang, and H.~Zhou, ``Variational deep embedding:
  A generative approach to clustering,'' in \emph{International Joint
  Conference on Artificial Intelligence (IJCAI)}, 2017, pp. 1965--1972.

\bibitem{Springenberg15}
J.~T. Springenberg, ``Unsupervised and semi-supervised learning with
  categorical generative adversarial networks,'' in \emph{International
  Conference on Learning Representations (ICLR)}, 2016, pp. 1965--1972.

\bibitem{Yang16}
J.~Yang, D.~Parikh, and D.~Batra, ``Joint unsupervised learning of deep
  representations and image clusters,'' in \emph{IEEE Conference on Computer
  Vision and Pattern Recognition (CVPR)}, 2016, pp. 5147--5156.

\bibitem{Xie16}
J.~Xie, R.~Girshick, and A.~Farhadi, ``Unsupervised deep embedding for
  clustering analysis,'' in \emph{International Conference on Machine Learning
  (ICML)}, 2016, pp. 478--487.

\bibitem{Biernacki00}
C.~Biernacki, G.~Celeux, and G.~Govaert, ``Assessing a mixture model for
  clustering with the integrated completed likelihood,'' \emph{{IEEE}
  Transactions on Pattern Analysis and Machine Intelligence}, vol.~22, no.~7,
  pp. 719--725, 2000.

\bibitem{ShiMalik2000}
J.~Shi and J.~Malik, ``Normalized cuts and image segmentation,'' \emph{IEEE
  Transactions on Pattern Analysis and Machine Intelligence}, vol.~22, no.~8,
  pp. 888--905, 2000.

\bibitem{kernelcut}
M.~Tang, D.~Marin, I.~Ben~Ayed, and Y.~Boykov, ``Kernel cuts: Kernel and
  spectral clustering meet regularization,'' \emph{International Journal of
  Computer Vision}, 2019.

\bibitem{DensityBias}
D.~Marin, M.~Tang, I.~Ben~Ayed, and Y.~Boykov, ``Kernel clustering: density
  biases and solutions,'' \emph{IEEE Transactions on Pattern Analysis and
  Machine Intelligence}, 2018.

\bibitem{Krause10}
A.~Krause, P.~Perona, and R.~G. Gomes, ``Discriminative clustering by
  regularized information maximization,'' in \emph{Neural Information
  Processing Systems (NIPS)}, 2010, pp. 775--783.

\bibitem{Bridle92}
J.~S. Bridle, A.~J.~R. Heading, and D.~J.~C. MacKay, ``Unsupervised
  classifiers, mutual information and ``phantom targets'','' in \emph{Neural
  Information Processing Systems (NIPS)}, 1992, pp. 1096--1101.

\bibitem{Grandvalet2004}
Y.~Grandvalet and Y.~Bengio, ``Semi-supervised learning by entropy
  minimization,'' in \emph{Neural Information Processing Systems (NIPS)}, 2004,
  pp. 529--536.

\bibitem{boyd2011distributed}
S.~Boyd, N.~Parikh, E.~Chu, B.~Peleato, and J.~Eckstein, ``Distributed
  optimization and statistical learning via the alternating direction method of
  multipliers,'' \emph{Foundations and Trends{\textregistered} in Machine
  Learning}, vol.~3, no.~1, pp. 1--122, 2011.

\bibitem{Aljalbout18}
E.~Aljalbout, V.~Golkov, Y.~Siddiqui, and D.~Cremers, ``Clustering with deep
  learning: Taxonomy and new methods,'' \emph{arXiv: 1801.07648}, 2018.

\bibitem{MacKay}
D.~J.~C. MacKay, \emph{Information Theory, Inference and Learning
  Algorithms}.\hskip 1em plus 0.5em minus 0.4em\relax Cambridge University
  Press, 2003.

\bibitem{boykov2015volumetric}
Y.~Boykov, H.~Isack, C.~Olsson, and I.~Ben~Ayed, ``Volumetric bias in
  segmentation and reconstruction: Secrets and solutions,'' in
  \emph{International Conference on Computer Vision (ICCV)}, 2015, pp.
  1769--1777.

\bibitem{lange2000optimization}
K.~Lange, D.~R. Hunter, and I.~Yang, ``Optimization transfer using surrogate
  objective functions,'' \emph{Journal of computational and graphical
  statistics}, vol.~9, no.~1, pp. 1--20, 2000.

\bibitem{Zhang2007}
Z.~Zhang, J.~T. Kwok, and D.-Y. Yeung, ``Surrogate maximization/minimization
  algorithms and extensions,'' \emph{Machine Learning}, vol.~69, pp. 1--33,
  2007.

\bibitem{Yuille2001}
A.~L. Yuille and A.~Rangarajan, ``The concave-convex procedure {(CCCP)},'' in
  \emph{Neural Information Processing Systems ({NIPS})}, 2001, pp. 1033--1040.

\bibitem{Narasimhan2005}
\BIBentryALTinterwordspacing
M.~Narasimhan and J.~Bilmes, ``A submodular-supermodular procedure with
  applications to discriminative structure learning,'' in \emph{Conference on
  Uncertainty in Artificial Intelligence (UAI)}, 2005, pp. 404--412. [Online].
  Available: \url{http://dl.acm.org/citation.cfm?id=3020336.3020387}
\BIBentrySTDinterwordspacing

\bibitem{Boyd_EE364b}
\BIBentryALTinterwordspacing
S.~Boyd, ``{EE364b} lecture notes in alternating direction method of
  multipliers,'' 2018. [Online]. Available:
  \url{https://web.stanford.edu/class/ee364b/lectures/admm_slides.pdf}
\BIBentrySTDinterwordspacing

\bibitem{Yuan2017}
J.~Yuan, K.~Yin, Y.~Bai, X.~Feng, and X.~Tai, ``Bregman-proximal augmented
  lagrangian approach to multiphase image segmentation,'' in \emph{Scale Space
  and Variational Methods in Computer Vision (SSVM)}, 2017, pp. 524--534.

\bibitem{csiszar_korner_2011}
I.~Csiszár and J.~Körner, \emph{Information Theory: Coding Theorems for
  Discrete Memoryless Systems}, 2nd~ed.\hskip 1em plus 0.5em minus 0.4em\relax
  Cambridge University Press, 2011.

\bibitem{Srivastava14}
N.~Srivastava, G.~Hinton, A.~Krizhevsky, I.~Sutskever, and R.~Salakhutdinov,
  ``Dropout: A simple way to prevent neural networks from overfitting,''
  \emph{Journal of Machine Learning Research}, vol.~15, no.~1, pp. 1929--1958,
  2014.

\bibitem{Nair10}
V.~Nair and G.~E. Hinton, ``Rectified linear units improve restricted boltzmann
  machines,'' in \emph{International Conference on Machine Learning (ICML)},
  2010, pp. 807--814.

\bibitem{Valpola15}
H.~Valpola, ``From neural {PCA} to deep unsupervised learning,'' \emph{arXiv:
  1411.7783}, 2015.

\bibitem{glorot10a}
X.~Glorot and Y.~Bengio, ``Understanding the difficulty of training deep
  feedforward neural networks,'' in \emph{International Conference on
  Artificial Intelligence and Statistics (AISTATS)}, 2010, pp. 249--256.

\bibitem{kingma2014adam}
D.~P. Kingma and J.~Ba, ``Adam: A method for stochastic optimization,''
  \emph{ICLR}, 2015.

\bibitem{Kuhn55}
H.~W. Kuhn and B.~Yaw, ``The hungarian method for the assignment problem,''
  \emph{Naval Res. Logist. Quart}, pp. 83--97, 1955.

\bibitem{Xu03}
W.~Xu, X.~Liu, and Y.~Gong, ``Document clustering based on non-negative matrix
  factorization,'' in \emph{International ACM SIGIR Conference on Research and
  Development in Information Retrieval}, 2003, pp. 267--273.

\bibitem{Taylor2016}
G.~Taylor, R.~Burmeister, Z.~Xu, B.~Singh, A.~Patel, and T.~Goldstein,
  ``Training neural networks without gradients: A scalable {ADMM} approach,''
  in \emph{International Conference on Machine Learning ({ICML})}, 2016, pp.
  2722--2731.

\bibitem{Wang2019}
J.~Wang, F.~Yu, X.~Chen, and L.~Zhao, ``{ADMM} for efficient deep learning with
  global convergence,'' in \emph{ACM SIGKDD International Conference on
  Knowledge Discovery \& Data Mining ({KDD})}, 2019.

\bibitem{Ziko2018}
I.~M. Ziko, E.~Granger, and I.~{Ben Ayed}, ``Scalable laplacian k-modes,'' in
  \emph{Neural Information Processing Systems (NeurIPS)}, 2018.

\bibitem{Shaham18}
U.~Shaham, K.~Stanton, H.~Li, B.~Nadler, R.~Basri, and Y.~Kluger,
  ``{SpectralNet}: Spectral clustering using deep neural networks,'' in
  \emph{International Conference on Learning Representation (ICLR)}, 2018.

\bibitem{Kailath67}
T.~{Kailath}, ``The divergence and {Bhattacharyya} distance measures in signal
  selection,'' \emph{IEEE Transactions on Communication Technology}, vol.~15,
  no.~1, pp. 52--60, February 1967.

\bibitem{BERTSEKAS76}
D.~P. Bertsekas, ``On penalty and multiplier methods for constrained
  minimization,'' \emph{{SIAM} J. control and optimization}, vol.~14, no.~2,
  1976.

\end{thebibliography}
\bibliographystyle{IEEEtran}

\end{document}